\documentclass{article}

\PassOptionsToPackage{numbers, compress}{natbib}

\usepackage[preprint]{neurips_2026}

\usepackage[utf8]{inputenc} 
\usepackage[T1]{fontenc}    
\usepackage[colorlinks=true,
            linkcolor=red,
            citecolor=green,
            urlcolor=cyan]{hyperref}
\usepackage{url}            
\usepackage{booktabs}       
\usepackage{amsfonts}       
\usepackage{nicefrac}       
\usepackage{microtype}      
\usepackage{xcolor}         

\usepackage{bm}
\usepackage{amsmath,amssymb,amsthm}
\usepackage{mathtools}
\usepackage[capitalize,noabbrev]{cleveref}
\usepackage{autonum}
\usepackage{multirow}
\usepackage{booktabs}

\usepackage{algorithm}
\usepackage{algpseudocode}
\AtBeginEnvironment{algorithmic}{\ttfamily}
\usepackage{inconsolata}

\newcommand{\argmin}{\mathop{\rm argmin}\limits}

\DeclareMathOperator{\tr}{tr}

\DeclareMathOperator{\EE}{\mathbb{E}}

\DeclarePairedDelimiterX\inner[2]{\langle}{\rangle}{{#1},{#2}}

\DeclarePairedDelimiter\bra{[}{]}
\DeclarePairedDelimiterX\Set[2]{\{}{\}}{\mspace{2mu}{#1}\;\delimsize|\;{#2}\mspace{2mu}}
\DeclarePairedDelimiterX\Prn[2]{(}{)}{\mspace{2mu}{#1}\;\delimsize|\;{#2}\mspace{2mu}}
\DeclarePairedDelimiterX\Bra[2]{[}{]}{\mspace{2mu}{#1}\;\delimsize|\;{#2}\mspace{2mu}}

\newcommand{\R}{\mathbb R}

\renewcommand{\epsilon}{\varepsilon}

\NewDocumentCommand{\exsub}{s m O{} m}{%
  \IfBooleanT{#1}{\EE_{#2}\nolimits\bra*{#4}}%
  \IfBooleanF{#1}{\EE_{#2}\nolimits\bra[#3]{#4}}%
}

\usepackage{CJKutf8}

\usepackage{thmtools,thm-restate}

\declaretheorem[
  name=Proposition,
  refname={Proposition,Propositions},
]{proposition}
\declaretheorem[
  name=Lemma,
  refname={Lemma,Lemmas},
]{lemma}

\crefname{algorithm}{Algorithm}{Algorithms}
\crefname{line}{Line}{Lines}
\crefname{section}{Sec.}{Secs.}
\crefname{subsection}{Sec.}{Secs.}
\crefname{appendix}{Appendix}{Appendices}
\crefname{table}{Tab.}{Tabs.}
\crefname{figure}{Fig.}{Figs.}
\crefname{equation}{}{}

\Crefname{table}{Table}{Tables}
\Crefname{figure}{Figure}{Figures}
\Crefname{equation}{Eq.}{Eqs.}
\Crefname{section}{Section}{Sections}
\Crefname{subsection}{Section}{Sections}

\usepackage{mathrsfs}

\title{A Riemannian Geometry for Low-rank Adaptation}

\author{%
  Shoichiro Takeda \\
  NTT, Inc.\\
  \texttt{shoichiro.takeda@ntt.com}\\
  \And
  Shin'ya Yamaguchi \\
  NTT, Inc.\\
  \texttt{shinya.yamaguchi@ntt.com}\\
  \And
  Satoshi Suzuki\\
  NTT, Inc.\\
  \texttt{satoshixv.suzuki@ntt.com}\\
  \And
  Yasunori Akagi\\
  NTT, Inc.\\
  \texttt{yasunori.akagi@ntt.com}\\
}

\begin{document}

\maketitle



\begin{abstract}
Low-rank adaptation (LoRA) is widely used as a parameter-efficient fine-tuning technique for pre-trained deep neural networks, which approximates the weight update via full fine-tuning by a low-rank matrix $BA^\top$. This parameterization leads to the equivalence relation $(B, A) \sim (BG^{-1}, AG^\top)$ for any invertible matrix $G$ because $BA^\top = BG^{-1}(AG^\top)^\top$ and thus both pairs yield the same loss value. This relation induces a quotient manifold where matrices $(BG^{-1}, AG^\top)$ for all $G$ are identified, eliminating redundant directions along which the loss value remains unchanged. To respect the geometry of this manifold, the original search space is endowed with a Riemannian metric that is invariant under the equivalence relation. Such a metric induces preconditioning at each gradient step and ensures that each weight update via LoRA changes the loss value, leading to efficient optimization. In this paper, we propose a new Riemannian metric that is specifically tailored to LoRA to close the gap to full fine-tuning at the weight level. We theoretically show that LoRA with our preconditioning induced by this metric satisfies the following two properties at each iteration: (i) The weight update follows the direction closest to the gradient of full fine-tuning within the subspace of first-order weight changes allowed by the LoRA parameterization. (ii) The updated weight matrix is closer in Frobenius norm to that of full fine-tuning than the updated weight matrices of LoRA with conventional preconditioning and without preconditioning. These theoretical insights suggest that our preconditioning makes LoRA better approximate full fine-tuning, thereby leading to more efficient optimization. Experiments show the effectiveness and efficiency of our preconditioning for LoRA on fine-tuning tasks with language and vision domains.
\end{abstract}

\section{Introduction}\label{sec:introduction}
Recent progress of foundation models with deep neural networks, such as large vision and large language models, has been remarkable \cite{openai2024gpt4technicalreport,grattafiori2024llama3herdmodels,deepseekai2025deepseekv3technicalreport,yang2025qwen3technicalreport,bai2025qwen3vltechnicalreport}.
When transferring these pre-trained models to a specific downstream task, one popular technique is full fine-tuning that updates all the model parameters.
However, full fine-tuning of large foundation models is impractical due to their large number of parameters, requiring high computational and storage costs for each task-specific model.

To address this issue, parameter-efficient fine-tuning techniques have become a promising alternative to full fine-tuning \cite{li2021prefix,pfeiffer2021adapterfusion,hu2022lowrank,hu2023llmadapters,han2024parameterefficient}.
Among these techniques, low-rank adaptation (LoRA) is the de facto standard thanks to its simplicity and effectiveness \cite{hu2022lowrank}.
Let $W_0, W \in \R^{m \times n}$ denote the weight matrices of a layer in the pre-trained and fine-tuned models, respectively.
In LoRA, the weight update of full fine-tuning $\Delta W \in \R^{m \times n}$ from $W_0$ to $W$ is approximated by low-rank matrix factorization\footnote{We here omit the scaling $BA^\top$ by $s \in \R$ for simplicity as it can be merged into $B$ and $A$~\cite{li2025flatLoRA}.} as follows:
\begin{align}\label{eq:modeling_lora}
    W = W_0 + \Delta W \approx W_0 + BA^\top,
\end{align}
where $B \in \R^{m \times r}$ and $A \in \R^{n \times r}$ are trainable factors.
Let $\mathcal{L} : \R^{m \times n} \rightarrow \R$ denote the loss function for a downstream task.
Then, LoRA is formulated as
\begin{equation}\label{eq:lora}
    \min_{(B,A) \in \mathcal{X}}
    \mathcal{L}\left( W_0 + BA^\top \right),
\end{equation}
where $\mathcal{X} \coloneqq \R^{m \times r} \times \R^{n \times r}$ denotes the original search space for this problem.
By choosing a sufficiently small $r \ll \min(m,n)$, LoRA yields fewer trainable parameters and lower computational and storage costs than full fine-tuning, i.e., $\min_{W \in \R^{m \times n}} \mathcal{L}\left( W \right)$.

Due to the fewer trainable parameters, LoRA often exhibits a performance gap compared to full fine-tuning. 
To fill this gap, various approaches have been studied for improving the performance of LoRA, e.g., modeling (architecture) improvements~\cite{wu2024mosLoRA,li2025stiefel,park2025stiefel,dong2025aurora}, leveraging multiple LoRA modules~\cite{dou2024loramoe,luo2024moelora_constrative,chenghao2025greatagain,sun2025strongmixture,xia2024chainofLoRA,lin2025teamlora,liu2025seedlora}, learning-rate tuning~\cite{hayou2024lora}, and refined initialization schemes~\cite{meng2024pissa,wang2025milora,chenghao2025greatagain,wang2024loraga,zhang2025loraone}.
As another line of approach, exploiting the invariance of LoRA has been studied~\cite{zhang2024riemann,sun2025strongmixture}.
Specifically, the loss function $\mathcal{L}$ is invariant under the equivalence relation $(B, A) \sim (BG^{-1}, AG^\top)$, in the sense that $\mathcal{L}\,( W_0 + BA^\top) = \mathcal{L}\,( W_0 + BG^{-1}(AG^\top)^\top)$ for all $G \in \mathrm{GL}(r)$.
This equivalence relation induces the quotient manifold $\mathcal{X}/\mathrm{GL}(r)$ where matrices $(BG^{-1}, AG^\top)$ for all $G$ are identified, eliminating redundant directions along which the value of $\mathcal{L}$ remains unchanged.
To respect the geometry of this manifold, the original search space $\mathcal{X}$ is endowed with a Riemannian metric that is invariant under the equivalence relation.
Such a metric induces preconditioning at each gradient step and ensures that each weight update via LoRA changes the value of $\mathcal{L}$, leading to efficient optimization.
Conventional studies~\cite{zhang2024riemann,sun2025strongmixture} adopt a well-known invariant metric proposed in \cite{mishra2012riemanniangeometry}, and this metric induces preconditioning
\begin{equation}\label{eq:preconditioning_quotient}
    \nabla_{B}\mathcal{L} \,\,\mapsto\, (\nabla_{B}\mathcal{L}) (A^\top A)^{-1}, \quad
    \nabla_{A}\mathcal{L} \,\,\mapsto\, (\nabla_{A}\mathcal{L}) (B^\top B)^{-1},
\end{equation}
where $\nabla_{B}\mathcal{L}$ and $\nabla_{A}\mathcal{L}$ denote the Euclidean gradients of $\mathcal{L}$ with respect to $B$ and $A$, respectively.
LoRA with this preconditioning outperforms the original LoRA in multiple fine-tuning tasks.
However, the adopted Riemannian metric is classical and tuned to the low-rank matrix completion setting~\cite{mishra2012riemanniangeometry}.
Therefore, there remains room to tailor this metric to LoRA.

In this paper, we propose a new Riemannian metric to improve the performance of LoRA.
Specifically, we tailor the metric to LoRA to close the gap to full fine-tuning at the weight level.
As a result, this metric induces new preconditioning
\begin{equation}\label{eq:preconditioning_our}
    \nabla_{B} \mathcal{L} \,\,\mapsto\, ( I - \tfrac{1}{2} P_{B} ) (\nabla_{B} \mathcal{L}) (A^\top A)^{-1}, \quad
    \nabla_{A} \mathcal{L} \,\,\mapsto\, ( I - \tfrac{1}{2} P_{A} ) (\nabla_{A} \mathcal{L}) (B^\top B)^{-1},
\end{equation}
where $P_{B} \coloneqq B (B^\top B)^{-1} B^\top$ and $P_{A} \coloneqq A (A^\top A)^{-1} A^\top$ are the orthogonal projections onto the column spaces of $B$ and $A$, respectively.
We theoretically show that LoRA with this preconditioning satisfies the following two properties at each iteration $t$:
(i) The weight update from $W_{t}$ to $W_{t+1}$ follows the direction closest to the gradient of full fine-tuning within the subspace of first-order weight changes (i.e., first-order changes in $W$) allowed by the LoRA parameterization \cref{eq:modeling_lora}.
(ii) The updated weight matrix $W_{t+1}$ is closer in Frobenius norm to that of full fine-tuning than the updated weight matrices of LoRA with \cref{eq:preconditioning_quotient} and without preconditioning.
These theoretical insights suggest that our preconditioning makes LoRA better approximate full fine-tuning, thereby improving the performance of LoRA.
In addition, our preconditioning \cref{eq:preconditioning_our} admits an efficient implementation with the same time complexity of \cref{eq:preconditioning_quotient}.
We validated the practical effectiveness and efficiency of our preconditioning for LoRA on several fine-tuning tasks with language and vision domains, including natural language generation, image classification, and commonsense reasoning tasks.

\section{Related Works}
LoRA was initially formulated by \citet{hu2022lowrank} and achieves comparable full fine-tuning performance despite having very few trainable parameters.
However, it often exhibits a performance gap compared to full fine-tuning.
We here summarize several studies to improve the performance of LoRA.

{\bf Modeling (Architecture).}
Beyond the standard two-factor factorization \cite{hu2022lowrank,park2025stiefel,mishra2012riemanniangeometry,li2025flatLoRA} as described in \cref{eq:modeling_lora}, three-factor factorization has been proposed recently \cite{wu2024mosLoRA,li2025stiefel,park2025stiefel}, which inserts a small trainable matrix between $B$ and $A$ to fuse more information and flexibly.
Other approaches have also been explored, such as adapting low-rank values at each module \cite{zhang2023adaptive}, incorporating the mixture-of-experts architecture \cite{dou2024loramoe,luo2024moelora_constrative,chenghao2025greatagain,sun2025strongmixture}, exploiting multiple LoRA modules \cite{xia2024chainofLoRA,lin2025teamlora,liu2025seedlora}, and introducing non-linear mapping \cite{dong2025aurora}.
In this paper, we adopt the standard two-factor modeling \cref{eq:modeling_lora} as a first step and leave the expansion for other advanced modelings to future work.

{\bf Initialization.}
\citet{hu2022lowrank} originally proposed to use a random Gaussian initialization for $A$ and set zero for $B$ to ensure that $BA^\top$ is zero at the beginning of training.
Later, several initialization strategies have proposed to utilize the pre-trained weight characteristics \cite{meng2024pissa, wang2025milora}.
These methods differ in how they initialize $B$ and $A$, using the largest singular values and corresponding vectors of the pre-trained weight matrix \cite{meng2024pissa} or smaller ones \cite{wang2025milora}.
On the other hand, \citet{wang2024loraga} and \citet{zhang2025loraone} initialized $B$ and $A$ using the singular values and corresponding vectors of the one-step gradient of full fine-tuning.
In this paper, we follow the standard initialization proposed in \cite{hu2022lowrank} and the combination with the advanced initialization strategies is left for future work.

{\bf Optimization.}
Several optimization approaches for LoRA have been proposed \cite{wang2025lorapro, yue2025regsubspaecs, li2025stiefel, park2025stiefel}.
For example, \citet{wang2025lorapro} aligned the gradient of LoRA with that of full fine-tuning by solving a minimization problem for this gap\footnote{We further discuss the relationship between this method and our method in \cref{ap_sec:LoRAPro}.}.
To fully exploit low-dimensional subspaces spanned by the columns of $B$ and $A$, \citet{yue2025regsubspaecs} regularized those subspaces to capture complementary information.
Along this, \citet{li2025stiefel} and \citet{park2025stiefel} exploited the geometry of Stiefel manifold that imposes the orthogonal constraints on $B$ and/or $A$.
On the other hand, exploiting the invariance of LoRA has been studied \cite{zhang2024riemann,sun2025strongmixture}.
As explained in \cref{sec:introduction}, this approach induces preconditioning at each gradient step described in \cref{eq:preconditioning_quotient} and achieves efficient optimization compared to the original LoRA. 
This approach is the first Riemannian preconditioning study for LoRA, and its underlying idea is inspired by the classical low-rank matrix completion problem \cite{mishra2012riemanniangeometry}.
Note that, in the literature on low-rank matrix (or tensor) completion problem, several preconditioning variants have been proposed \cite{tong2022precontensor,zhang2024precon1,zhang2023precon2}.
Our work follows the Riemannian preconditioning for LoRA proposed in \cite{zhang2024riemann,sun2025strongmixture} but designs a Riemannian metric tailored to LoRA for more efficient optimization.

{\bf Others.}
To improve the performance of LoRA, several LoRA variants have been proposed.
\citet{hayou2024lora} applied different learning rates to $B$ and $A$ and shows fast convergence.
\citet{liu2024dora} decomposed the pre-trained weight into magnitude and direction, and then employed LoRA for directional updates to reduce the number of trainable parameters.
\citet{xia2024chainofLoRA} trained multiple LoRA sequentially like the residual learning to improve the performance of LoRA.
\citet{qlora} applied quantization to the pre-trained models with LoRA to save memory usage without sacrificing performance.
We leave the combination of these studies with our method to future work.

\section{Preliminary}
\subsection{Notations}
The set of real numbers is denoted by $\R$.
We denote the identity matrix and the zero matrix by $I$ and $O$, respectively, that omit the dimensional subscripts for simplicity.
The Frobenius norm for $X \in \R^{m \times n}$ is denoted by $\left\| X \right\|_{F} \coloneqq \sqrt{\tr(X^\top X)} = \sqrt{\sum_{i=0}^{m-1} \sum_{j=0}^{n-1} X_{ij}^2}$.
We denote the general linear group with size $r \times r$ by $\mathrm{GL}(r) = \{ G \in \R^{r \times r} \mid \operatorname{det}(G) \neq 0 \}$, where $\operatorname{det}(G)$ is the determinant of $G$.

\subsection{Riemannian Optimization}\label{sec:riemannian}
We review Riemannian optimization following~\cite{AbsMahSep2008,meyer2011factorization,bendokat2020Grassmannhandbook}, which will be used throughout this paper.

Let $\mathcal{M}$ be a Riemannian manifold endowed with a Riemannian metric $g_X : T_{X}\mathcal{M} \times T_{X}\mathcal{M} \rightarrow \R$ for each $X \in \mathcal{M}$, where $T_{X}\mathcal{M}$ denotes the tangent space of $\mathcal{M}$ at $X$.
This metric defines an inner product between two tangent vectors in $T_{X}\mathcal{M}$.
We consider an optimization problem
\begin{equation}\label{eq:obj_func_riemannian}
    \min_{X \in \mathcal{M}} f(X),
\end{equation}
where $f : \mathcal{M} \rightarrow \R$ is a smooth function. 
The Riemannian gradient of $f$ at $X \in \mathcal{M}$, denoted as $\operatorname{grad}_{X} f$, is defined as the unique tangent vector in $T_{X}\mathcal{M}$ satisfying
\begin{equation}\label{eq:def_rieman_grad}
    \mathrm{D}f(X)[\xi_{X}] = g_X \big(\operatorname{grad}_{X} f,\, \xi_{X} \big),\quad \forall \xi_{X} \in T_{X}\mathcal{M},
\end{equation}
where $\mathrm{D}f(X)[\xi_{X}]$ denotes the directional derivative of $f$ at $X$ along $\xi_{X}$.
Then, the $t$-th update of the Riemannian gradient descent method for \cref{eq:obj_func_riemannian} is given by
\begin{equation}\label{eq:update_riemannian}
    X_{t+1} = \mathcal{R}_{X_t} \big( - \alpha \operatorname{grad}_{X_t} f \big),
\end{equation}
where $\mathcal{R}_{X} : T_{X}\mathcal{M} \rightarrow \mathcal{M}$ is a retraction mapping to $\mathcal{M}$ and $\alpha > 0$ is a step size.

Let ${\sim}$ be an equivalence relation on $\mathcal{M}$ and consider the quotient manifold $\mathcal{M}/{\sim}$ where each point is an equivalence class $[X] = \{ Y \in \mathcal{M} \mid Y \sim X \}$.
In this setting, $T_X\mathcal{M}$ decomposes as
\begin{equation}
    T_X\mathcal{M} = V_X\mathcal{M} \oplus H_X\mathcal{M},
\end{equation}
where $\oplus$ is the direct sum.
The vertical space $V_X\mathcal{M}$ is the tangent space of the equivalence class $[X]$ at $X$, and the horizontal space $H_X\mathcal{M}$ is its orthogonal complement with respect to the metric, i.e.,
\begin{equation}\label{eq:vertical_horizontal_spaces}
    V_X\mathcal{M} = T_X [X], \qquad
    H_X\mathcal{M} = \{ \xi_X \in T_X \mathcal{M} \mid g_X(\xi_X, \zeta_X) = 0, \forall \zeta_X \in V_X\mathcal{M}\}.
\end{equation}
Although $\mathcal{M}/{\sim}$ is abstract since each point is an equivalence class, each tangent vector $\xi_{[X]} \in T_{[X]}(\mathcal{M}/{\sim})$ can be represented by a unique tangent vector $\xi_X \in H_X\mathcal{M}$ satisfying 
\begin{equation}\label{eq:horizontal_lift}
    \mathrm{D} \pi(X) [\xi_X] = \xi_{[X]},
\end{equation}
where $\pi : \mathcal{M} \rightarrow \mathcal{M}/{\sim}$ is the natural projection.
This unique tangent vector is called the horizontal lift of $\xi_{[X]}$ at $X$.
If the metric $g_X$ on $\mathcal{M}$ is invariant under the equivalence relation, i.e., $g_{X}(\xi_{X}, \zeta_{X}) = g_{Y}(\xi_{Y}, \zeta_{Y})$ whenever $X \sim Y$, we can define a metric $g_{[X]}$ on $\mathcal{M}/{\sim}$ as follows:
\begin{equation}\label{eq:def_riemannian_quotient}
    g_{[X]}(\xi_{[X]}, \zeta_{[X]}) \coloneqq g_{X}(\xi_{X}, \zeta_{X}),
\end{equation}
where $\xi_{X}, \zeta_{X} \in H_X\mathcal{M}$ are the horizontal lifts of $\xi_{[X]}, \zeta_{[X]} \in T_{[X]}(\mathcal{M}/{\sim})$ at $X$, respectively.
Endowed with this invariant metric, $\mathcal{M}/{\sim}$ is called a Riemannian quotient manifold, and the natural projection $\pi$ is a Riemannian submersion.
Let the function $f$ on $\mathcal{M}$ be invariant under the equivalence relation, i.e., $f(X) = f(Y)$ whenever $X \sim Y$.
This induces a well-defined function $\tilde{f} : \mathcal{M}/{\sim} \rightarrow \R$, which satisfies $f = \tilde{f} \circ \pi$.
In this case, $\operatorname{grad}_{X} f$ lies in $H_X\mathcal{M}$ and is the horizontal lift of $\operatorname{grad}_{[X]} \tilde{f}$ at $X$.
Let the retraction $\mathcal{R}_{X}$ preserve the equivalence relation, i.e., $\mathcal{R}_{X}(\xi_{X}) \sim \mathcal{R}_{Y} (\xi_{Y})$ whenever $X \sim Y$, where $\xi_{X}$ and $\xi_{Y}$ are the horizontal lifts of $\xi_{[X]}$ at $X$ and $Y$, respectively.
Then, the $t$-th update for \cref{eq:obj_func_riemannian}, described in \cref{eq:update_riemannian}, respects the geometry of $\mathcal{M}/{\sim}$ and rigorously corresponds to the $t$-th update for minimizing $\tilde{f}$ with respect to $[X]$ on $\mathcal{M}/{\sim}$.

\section{Riemannian Preconditioning for LoRA}\label{sec:LoRA_on_Quotient_Manifold}
In this section, we review the pioneered Riemannian preconditioning study for LoRA \cite{zhang2024riemann}, which is based on the classical low-rank matrix completion problem \cite{mishra2012riemanniangeometry}.

The loss function $\mathcal{L}$ in \cref{eq:lora} is invariant under the equivalence relation
\begin{equation}\label{eq:invariant_GL(r)}
    (B, A) \sim (BG^{-1}, AG^\top),
\end{equation}
in the sense that $\mathcal{L}\,( W_0 + BA^\top) = \mathcal{L}\,( W_0 + BG^{-1}(AG^\top)^\top)$ for all $G \in \mathrm{GL}(r)$.
This invariance implies that the original search space $\mathcal{X}$ admits redundant directions along which the value of $\mathcal{L}$ remains unchanged.
To eliminate this redundancy, \citet{zhang2024riemann} have proposed to exploit the quotient manifold 
\begin{equation}\label{eq:quotient_manifold}
    \mathcal{X}/\mathrm{GL}(r),
\end{equation}
where each point is an equivalence class
\begin{equation}\label{eq:equivalence_class}
    [(B,A)] = \{(BG^{-1}, AG^\top) \mid G \in \operatorname{GL}(r) \}.
\end{equation}
On this manifold, all matrices in $[(B,A)]$ are identified and thus the redundant directions are eliminated.
To respect the geometry of this manifold, the original search space $\mathcal{X}$ is endowed with a Riemannian metric that is invariant under the equivalence relation \cref{eq:invariant_GL(r)}.
Following \cite{mishra2012riemanniangeometry}, \citet{zhang2024riemann} adopted the well-known invariant metric
\begin{equation}\label{eq:metric}
    g_{(B,A)} \big( (\xi_B,\xi_A), (\zeta_B,\zeta_A) \big)
    = \tr\left( A^\top A \xi_B^\top \zeta_B \right) + \tr\left( B^\top B \xi_A^\top \zeta_A \right),
\end{equation}
where $(\zeta_B,\zeta_A), (\xi_B,\xi_A) \in T_{(B,A)}\mathcal{X}$.
In this setting, the horizontal space is given by
\begin{equation}\label{eq:horizon_space}
    H_{(B,A)}\mathcal{X} = \left\{ (\xi_B, \xi_A) \in T_{(B,A)} \mathcal{X} \mid B^\top \xi_B A^\top A = B^\top B \xi_A^\top A \right\},
\end{equation}
see \cite[(13)]{mishra2012riemanniangeometry}.
By solving \cref{eq:def_rieman_grad} with this invariant metric, we obtain the Riemannian gradient $(\operatorname{grad}_{B} \mathcal{L}, \operatorname{grad}_{A} \mathcal{L})$, which lies in the horizontal space $H_{(B,A)}\mathcal{X}$, as follows:
\begin{equation}\label{eq:grad_quotient}
    \operatorname{grad}_{B} \mathcal{L} = (\nabla_{B} \mathcal{L}) (A^\top A)^{-1}, \qquad
    \operatorname{grad}_{A} \mathcal{L} = (\nabla_{A} \mathcal{L}) (B^\top B)^{-1},
\end{equation}
where $\nabla_{B}\mathcal{L}$ and $\nabla_{A}\mathcal{L}$ denote the Euclidean gradients of $\mathcal{L}$ with respect to $B$ and $A$, respectively.
Given a widely used simple retraction $\mathcal{R}_{(B,A)}(\xi_{B},\xi_{A}) = (B + \xi_{B}, A +\xi_{A})$ which preserves the equivalence relation \cref{eq:invariant_GL(r)}, plugging \cref{eq:grad_quotient} into \cref{eq:update_riemannian} yields the $t$-th update of the Riemannian gradient descent method for \cref{eq:lora} that respects the geometry of $\mathcal{X}/\mathrm{GL}(r)$:
\begin{equation}\label{eq:update_quotient}
    B_{t+1} = B_t - \alpha (\nabla_{B_t} \mathcal{L}) (A_t^\top A_t)^{-1}, \qquad
    A_{t+1} = A_t - \alpha (\nabla_{A_t} \mathcal{L}) (B_t^\top B_t)^{-1},
\end{equation}
where $\nabla_{B_t}\mathcal{L}$ and $\nabla_{A_t}\mathcal{L}$ are evaluated at $(B_t, A_t)$.
Considering the plain gradient descent update, i.e., $B_{t+1} = B_t - \alpha (\nabla_{B_t} \mathcal{L})$ and $A_{t+1} = A_t - \alpha (\nabla_{A_t} \mathcal{L})$, the above update can be interpreted as the gradient descent update with preconditioning \cref{eq:preconditioning_quotient}.
For details, see \cite{mishra2012riemanniangeometry,zhang2024riemann}.

\section{Proposed Method: Riemannian Preconditioning for LoRA with New Metric}\label{sec:proposed_method}
Instead of the classical metric \cref{eq:metric}, we propose to endow the search space $\mathcal{X}$ with a new metric
\begin{equation}\label{eq:proposed_metric}
    g_{(B,A)} \big( (\xi_B,\xi_A), (\zeta_B,\zeta_A) \big)
    = \tr\left( A^\top A \xi_B^\top \left( I + P_B \right) \zeta_B \right)
    + \tr\left( B^\top B \xi_A^\top \left( I + P_A \right) \zeta_A \right).
\end{equation}
%
We first show that this metric is invariant under the equivalence relation \cref{eq:invariant_GL(r)}, with reference to \cite[Propositions 3.1 and 3.2]{meyer2011factorization}.
\begin{lemma}\label{lemma:horizontal_space}
    The manifold $\mathcal{X}$ endowed with the metric \cref{eq:proposed_metric} induces the horizontal space \cref{eq:horizon_space}.
\end{lemma}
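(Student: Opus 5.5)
First I will compute the vertical space. Differentiating the curve $G(s) = I + s\Omega$ through the orbit $[(B,A)]$ in \cref{eq:equivalence_class} at $s=0$ gives
\begin{equation}
V_{(B,A)}\mathcal{X} = \{(-B\Omega,\, A\Omega^\top) \mid \Omega \in \R^{r\times r}\}.
\end{equation}
Next I will impose $g_{(B,A)}\big((\xi_B,\xi_A),(-B\Omega, A\Omega^\top)\big)=0$ for all $\Omega$ under the metric \cref{eq:proposed_metric}. This will give the horizontal space in the sense of \cref{eq:vertical_horizontal_spaces}. The goal is to show that the resulting condition coincides with the one in \cref{eq:horizon_space}.

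The key simplification is the identity $P_B B = B$, so $(I+P_B)B = 2B$, and likewise $(I+P_A)A = 2A$. The first term then becomes
\begin{equation}
\tr\big(A^\top A\,\xi_B^\top (I+P_B)(-B\Omega)\big) = -2\tr\big(A^\top A\,\xi_B^\top B\,\Omega\big).
\end{equation}
The second term becomes
\begin{equation}
2\tr\big(B^\top B\,\xi_A^\top A\,\Omega^\top\big) = 2\tr\big(A^\top \xi_A B^\top B\,\Omega\big),
\end{equation}
where I transpose inside the trace.

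Orthogonality for all $\Omega$ is then equivalent to
\begin{equation}
A^\top A\,\xi_B^\top B = A^\top \xi_A\, B^\top B.
\end{equation}
This follows because $\tr(M\Omega)=0$ for all $\Omega$ iff $M=O$. Transposing this equation gives $B^\top \xi_B A^\top A = B^\top B\,\xi_A^\top A$, which is exactly \cref{eq:horizon_space}. Apart from the factor $2$, this is the same computation as in the classical case \cite{mishra2012riemanniangeometry}.

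I also need to check that \cref{eq:proposed_metric} is a genuine metric, i.e., positive definite. This holds since $I+P_B \succeq I$ and we assume $A^\top A$, $B^\top B \succ 0$ (full column rank), which is also needed for $P_B$, $P_A$ to be defined. To set up the link to \cref{eq:def_riemannian_quotient}, I will additionally verify invariance under $(B,A)\mapsto(BG^{-1},AG^\top)$ with tangent vectors transformed as $(\xi_BG^{-1},\xi_AG^\top)$, following \cite[Propositions 3.1 and 3.2]{meyer2011factorization}. Here $P_{BG^{-1}}=P_B$ because projections depend only on column spaces, and the $G$ factors cancel in the traces.

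I expect the only delicate step to be the trace manipulation, specifically keeping the transposes straight so that the condition lands exactly in the stated form. The projection identity $P_B B = B$ is what makes the new metric reproduce the same horizontal space.
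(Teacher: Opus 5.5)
Your proposal is correct and matches the paper's proof step for step. Both take the vertical space $\{(-B\Omega, A\Omega^\top)\}$, use $(I+P_B)B=2B$ and $(I+P_A)A=2A$, and require the resulting trace pairing to vanish for all $\Omega$. The added positive-definiteness and invariance checks are not needed for this lemma (invariance is the content of the next proposition), but they are harmless.
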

\begin{proposition}\label{prop:our_metric_is_invariant}
    The manifold $\mathcal{X}$ is endowed with the metric \cref{eq:proposed_metric}.
    Let $\xi_{[(B,A)]}$ be a tangent vector to the quotient manifold $\mathcal{X}/\mathrm{GL}(r)$ at $[(B,A)]$.
    The horizontal lifts of $\xi_{[(B,A)]}$ at $(B,A)$, denoted as $(\xi_B, \xi_A)$, and at $(BG^{-1}, AG^{\top})$, denoted as $(\xi_{BG^{-1}}, \xi_{AG^{\top}})$, have a relation $(\xi_{BG^{-1}}, \xi_{AG^{\top}}) = (\xi_{B} G^{-1}, \xi_{A} G^{\top})$.
    Then, the metric \cref{eq:proposed_metric} is invariant under the equivalence relation \cref{eq:invariant_GL(r)}.
\end{proposition}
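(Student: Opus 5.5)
The plan has two parts. First I establish the transformation rule for horizontal lifts. Then I substitute it into the metric \cref{eq:proposed_metric} and check that each factor transforms so the traces are unchanged.

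\textbf{Step 1 (relation between lifts).} Let $\phi_G(B,A) = (BG^{-1}, AG^\top)$. Then $\pi\circ\phi_G = \pi$, so differentiating gives $\mathrm{D}\pi(BG^{-1},AG^\top)[(\xi_B G^{-1}, \xi_A G^\top)] = \mathrm{D}\pi(B,A)[(\xi_B,\xi_A)] = \xi_{[(B,A)]}$. By uniqueness of the horizontal lift \cref{eq:horizontal_lift}, it remains to show that $(\xi_B G^{-1}, \xi_A G^\top)$ is horizontal at $(BG^{-1}, AG^\top)$. By \cref{lemma:horizontal_space}, the horizontal space for our metric is \cref{eq:horizon_space}, so I check the defining condition directly. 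With $\tilde B = BG^{-1}$, $\tilde A = AG^\top$, $\tilde\xi_B = \xi_B G^{-1}$, $\tilde\xi_A = \xi_A G^\top$, we get
\[
\tilde B^\top \tilde\xi_B \tilde A^\top \tilde A = G^{-\top} B^\top \xi_B A^\top A G^\top ,
\qquad
\tilde B^\top \tilde B \tilde\xi_A^\top \tilde A = G^{-\top} B^\top B \xi_A^\top A G^\top .
\]
These are equal because $(\xi_B,\xi_A)$ satisfies $B^\top\xi_B A^\top A = B^\top B\xi_A^\top A$. This proves the claimed relation.

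\textbf{Step 2 (invariance).} First, the projections are unchanged: $P_{BG^{-1}} = BG^{-1}(G^{-\top}B^\top B G^{-1})^{-1}G^{-\top}B^\top = B(B^\top B)^{-1}B^\top = P_B$, and likewise $P_{AG^\top} = P_A$. This is simply the fact that the column spaces are unchanged.

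Next I substitute into the first trace. Since $\tilde A^\top\tilde A = G A^\top A G^\top$,
\[
\tr\bigl(G A^\top A G^\top G^{-\top}\xi_B^\top (I+P_B)\zeta_B G^{-1}\bigr) = \tr\bigl(A^\top A\,\xi_B^\top (I+P_B)\zeta_B\bigr),
\]
where the last equality uses cyclicity to cancel $G^{-1}$ against $G$. The second trace works the same way: $\tilde B^\top\tilde B = G^{-\top}B^\top B G^{-1}$ and $\tilde\xi_A^\top = G\xi_A^\top$, so the $G$ factors cancel and we get $\tr(B^\top B\xi_A^\top(I+P_A)\zeta_A)$. Hence $g_{(BG^{-1},AG^\top)}(\tilde\xi,\tilde\zeta) = g_{(B,A)}(\xi,\zeta)$, and the induced quotient metric \cref{eq:def_riemannian_quotient} is well defined.

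\textbf{Main obstacle.} The only nontrivial ingredient is Step 1, specifically that the horizontal space for the new metric coincides with \cref{eq:horizon_space}. That identification is supplied by \cref{lemma:horizontal_space}. If it were not available, I would prove it directly. The vertical space is $\{(-B\Omega, A\Omega^\top)\}$. Orthogonality under the new metric gives the condition $\tr(A^\top A\xi_B^\top(I+P_B)B\Omega) = \tr(B^\top B\xi_A^\top(I+P_A)A\Omega^\top)$ for all $\Omega$. Since $P_B B = B$ and $P_A A = A$, the factor $(I+P)$ acts as $2I$ on both sides, and the condition reduces to the original one. The remaining algebra is routine.
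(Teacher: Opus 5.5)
Your proposal is correct and follows essentially the same route as the paper. You use $\pi\circ\phi_G=\pi$, the chain rule, and uniqueness of horizontal lifts after checking the condition \cref{eq:horizon_space} at $(BG^{-1},AG^\top)$, and then use $P_{BG^{-1}}=P_B$, $P_{AG^\top}=P_A$ and cyclicity of the trace to show the two metric values coincide; your fallback verification of \cref{lemma:horizontal_space} via $(I+P_B)B=2B$ and $(I+P_A)A=2A$ also matches the paper's argument.
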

These proofs are provided in \cref{ap_sec:proof_horizontal_space,ap_sec:proof_invariant}, respectively.
From \cref{prop:our_metric_is_invariant}, by solving \cref{eq:def_rieman_grad} with our invariant metric \cref{eq:proposed_metric}, we obtain the Riemannian gradient $(\operatorname{grad}_{B} \mathcal{L}, \operatorname{grad}_{A} \mathcal{L})$, which lies in the horizontal space $H_{(B,A)}\mathcal{X}$, as follows:
\begin{equation}\label{eq:grad_our}
    \operatorname{grad}_{B} \mathcal{L} = ( I - \tfrac{1}{2} P_B ) (\nabla_{B} \mathcal{L}) (A^\top A)^{-1}, \quad
    \operatorname{grad}_{A} \mathcal{L} = ( I - \tfrac{1}{2} P_A ) (\nabla_{A} \mathcal{L}) (B^\top B)^{-1}.
\end{equation}
This derivation is provided in \cref{ap_subsec:deviation_our_riemannian_grad}.
Compared to \cref{eq:grad_quotient}, this is multiplied from the left by $(I - \tfrac{1}{2} P_{B})$ and $(I - \tfrac{1}{2} P_{A})$, respectively.
These matrices can be rewritten as $\left(I - \tfrac{1}{2} P_{B}\right) = \tfrac{1}{2} P_{B} + Q_B$ and $\left(I - \tfrac{1}{2} P_{A}\right) = \tfrac{1}{2} P_{A} + Q_A$, where $Q_B \coloneqq (I - P_{B})$ and $Q_A \coloneqq (I - P_{A})$ are the complements of $P_{B}$ and $P_{A}$, which project onto the orthogonal column spaces of $B$ and $A$, denoted by $\operatorname{col}(B)^\perp$ and $\operatorname{col}(A)^\perp$.
Considering the identity maps are given by $P_{B} + Q_B$ and $P_{A} + Q_A$, our Riemannian gradient \cref{eq:grad_our} can be interpreted as a subspace-guided extension of \cref{eq:grad_quotient} that attenuates by half the gradient directions along $\operatorname{col}(B)$ and $\operatorname{col}(A)$, while preserving those along $\operatorname{col}(B)^\perp$ and $\operatorname{col}(A)^\perp$.

Given the same retraction in \cref{sec:LoRA_on_Quotient_Manifold}, plugging \cref{eq:grad_our} into \cref{eq:update_riemannian} yields the $t$-th update of the Riemannian gradient descent method for \cref{eq:lora} that respects the geometry of $\mathcal{X}/\mathrm{GL}(r)$:
\begin{equation}\label{eq:update_our}
    B_{t+1} = B_t - \alpha ( I - \tfrac{1}{2} P_{B_t} ) (\nabla_{B_t} \mathcal{L}) (A_t^\top A_t)^{-1},\,\,
    A_{t+1} = A_t - \alpha ( I - \tfrac{1}{2} P_{A_t} ) (\nabla_{A_t} \mathcal{L}) (B_t^\top B_t)^{-1}.
\end{equation}
This update can be interpreted as the gradient descent update with preconditioning~\cref{eq:preconditioning_our}.

\subsection{An Efficient Implementation of Our Preconditioning}\label{sec:implementation}
The computation of the conventional preconditioning for $\nabla_{B} \mathcal{L}$ in \cref{eq:preconditioning_quotient}, i.e., $(\nabla_{B} \mathcal{L}) (A^\top A)^{-1}$, requires $O(nr^2)$ time to form $A^\top A$, $O(r^3)$ for its inversion, and $O(mr^2)$ for the remaining matrix multiplication, resulting in a total time complexity of $O((m+n)r^2 + r^3)$.
Surprisingly, our preconditioning for $\nabla_{B} \mathcal{L}$ in \cref{eq:preconditioning_our}, i.e., $(I - \tfrac{1}{2} P_{B}) (\nabla_{B} \mathcal{L}) (A^\top A)^{-1}$, admits an efficient implementation with the same time complexity, even though $(I - \tfrac{1}{2} P_{B})$ is added.
A naive implementation requires $O(m^2r)$ time to explicitly form $(I - \tfrac{1}{2} P_{B})$.
However, this preconditioning can be rewritten as $(\nabla_{B} \mathcal{L}) (A^\top A)^{-1} - \tfrac{1}{2} B (B^\top B)^{-1} (B^\top (\nabla_{B} \mathcal{L}) (A^\top A)^{-1} )$, which avoids forming any $m \times m$ matrices.
The second term requires $O((m+n)r^2)$ time to form $B^\top (\nabla_{B} \mathcal{L}) (A^\top A)^{-1}$, $O(mr^2)$ for $B^\top B$, $O(r^3)$ for its inversion, and $O(mr^2)$ for the remaining matrix multiplication, resulting in the same total time complexity of $O((m+n)r^2 + r^3)$.
Note that this bound simplifies to $O((m+n)r^2)$ since $r \ll \min(m,n)$.
A similar discussion holds in preconditioning for $\nabla_{A} \mathcal{L}$.


\section{Theoretical Insights}\label{sec:theoretical_insights}
We here investigate how LoRA with our preconditioning \cref{eq:preconditioning_our} affect the weight matrix $W$ at each iteration $t$.
Specifically, let $W_{t}$ denote the $t$-th weight matrix, we analyze (i) the $t$-th weight update from $W_{t}$ to $W_{t+1}$ and (ii) the $t$-th updated weight matrix $W_{t+1}$.
For this purpose, we first show the $t$-th weight update via full fine-tuning and LoRA with different preconditioning:
the conventional preconditioning \cref{eq:preconditioning_quotient} and
our preconditioning \cref{eq:preconditioning_our}.
Hereafter, we ignore terms of order $O(\alpha^2)$ by assuming that the step size $\alpha$ is sufficiently small, and use the chain rule: $\nabla_{B} \mathcal{L} = (\nabla_{W} \mathcal{L}) A$ and $\nabla_{A} \mathcal{L} = (\nabla_{W} \mathcal{L})^\top B$, where $\nabla_{W} \mathcal{L}$ denote the Euclidean gradient of $\mathcal{L}$ with respect to $W$. 
Also, we assume the scaling factor for $BA^\top$ as $s = 1$, no dropout, and no weight decay.

Let $W_{t+1}^{\operatorname{FFT}}$ denote the $t$-th updated weight matrix of full fine-tuning.
Then, the $t$-th weight update via full fine-tuning is given by $W_{t+1}^{\operatorname{FFT}} = W_t - \alpha \nabla_{W_t} \mathcal{L}$ using the plain gradient descent update, where $\nabla_{W_t} \mathcal{L}$ is evaluated at $W_t$.
Considering $\nabla_{W_t} \mathcal{L} = (P_{B_t} + Q_{B_t}) (\nabla_{W_t} \mathcal{L}) (P_{A_t} + Q_{A_t})$, this update can be rewritten as
\begin{equation}\label{eq:W_(t+1)_FFT}
    W_{t+1}^{\operatorname{FFT}}
    = W_t - \alpha P_{B_t} (\nabla_{W_t} \mathcal{L}) - \alpha (\nabla_{W_t} \mathcal{L}) P_{A_t}
    + \alpha P_{B_t} (\nabla_{W_t} \mathcal{L}) P_{A_t} - \alpha Q_{B_t} (\nabla_{W_t} \mathcal{L}) Q_{A_t}.
\end{equation}
Thus, the $t$-th weight update via full fine-tuning follows the gradient of full fine-tuning $\nabla_{W_t} \mathcal{L}$ that can be decomposed into components involving projections onto the subspaces induced by $B_t$ and $A_t$.

On the other hand, let $W_{t+1}^{\operatorname{LoRA}}$ denote the $t$-th updated weight matrix of LoRA, and then the $t$-th weight update via LoRA is given by $W_{t+1}^{\operatorname{LoRA}} = W_0 + B_{t+1} A_{t+1}^\top$ because of the LoRA parameterization~\cref{eq:modeling_lora}.
By using the plain gradient descent update for $B_{t+1}$ and $A_{t+1}$, this update can be rewritten as
\begin{align}
    W_{t+1}^{\operatorname{LoRA}}
    &= W_0 + (B_{t} - \alpha \nabla_{B_t} \mathcal{L}) (A_{t} - \alpha \nabla_{A_t} \mathcal{L})^\top \label{eq:W_(t+1)_LoRA_before_expand}\\
    &\approx W_t - \alpha B_t B_t^\top (\nabla_{W_t} \mathcal{L}) - \alpha (\nabla_{W_t} \mathcal{L}) A_t A_t^\top,\label{eq:W_(t+1)_LoRA}
\end{align}
where we used the relation $W_{t} = W_{0} + B_t A_t^\top$.
This update follows the gradient of full fine-tuning $\nabla_{W_t} \mathcal{L}$ projected approximately onto the column spaces of $B_t$ and $A_t$, as $B_t B_t^\top (\nabla_{W_t} \mathcal{L})$ and $(\nabla_{W_t} \mathcal{L}) A_t A_t^\top$, respectively.

Applying the conventional preconditioning \cref{eq:preconditioning_quotient} to \cref{eq:W_(t+1)_LoRA_before_expand} yields
\begin{equation}\label{eq:W_(t+1)_LoRACP}
    W_{t+1}^{\operatorname{LoRA-CP}}
    \approx W_t - \alpha P_{B_t} (\nabla_{W_t} \mathcal{L}) - \alpha (\nabla_{W_t} \mathcal{L}) P_{A_t},
\end{equation}
where $W_{t+1}^{\operatorname{LoRA-CP}}$ denotes $W_{t+1}^{\operatorname{LoRA}}$ with \cref{eq:preconditioning_quotient}.
This result has been reported in \cite[Sec. 3]{zhang2024riemann}, and we provide this deviation in \cref{ap_sec:derivation_of_W_LoRACP}.
Compared to \cref{eq:W_(t+1)_LoRA}, this update follows the gradient of full fine-tuning $\nabla_{W_t} \mathcal{L}$ projected onto the column spaces of $B_t$ and $A_t$ by $P_{B_t}$ and $P_{A_t}$, respectively, and it matches the first three terms in \cref{eq:W_(t+1)_FFT}.

Applying our preconditioning \cref{eq:preconditioning_our} to \cref{eq:W_(t+1)_LoRA_before_expand} yields
\begin{equation}\label{eq:W_(t+1)_our}
    W_{t+1}^{\operatorname{LoRA-OP}}
    \approx W_t - \alpha P_{B_t} (\nabla_{W_t} \mathcal{L}) - \alpha (\nabla_{W_t} \mathcal{L}) P_{A_t} 
    + \alpha P_{B_t} (\nabla_{W_t} \mathcal{L}) P_{A_t},
\end{equation}
where $W_{t+1}^{\operatorname{LoRA-OP}}$ denotes $W_{t+1}^{\operatorname{LoRA}}$ with \cref{eq:preconditioning_our}.
This derivation is provided in \cref{ap_sec:derivation_of_W_our}.
Compared to \cref{eq:W_(t+1)_LoRACP}, this update successfully incorporates the term $P_{B_t} (\nabla_{W_t} \mathcal{L}) P_{A_t}$ and matches the first four terms in \cref{eq:W_(t+1)_FFT}.
For further analysis, we show the following proposition.
\begin{proposition}\label{prop:projection}
    The first-order weight changes allowed by the LoRA parameterization \cref{eq:modeling_lora} form the subspace $\mathcal{S}_{(B,A)} = \{ \mathrm{d} W \in \R^{m \times n} \mid Q_B (\mathrm{d} W) Q_A = O\}$.
    Then, the orthogonal projection of $X\in\R^{m \times n}$ onto $\mathcal{S}_{(B,A)}$, denoted as $\operatorname{Proj}_{\mathcal{S}_{(B,A)}}(X)$, in terms of the Frobenius norm is given by $\operatorname{Proj}_{\mathcal{S}_{(B,A)}}(X) = P_B X + X P_A - P_B X P_A$.
\end{proposition}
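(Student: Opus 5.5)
The proof has two parts. First we identify the set of first-order weight changes. Then we verify the projection formula.

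\textbf{Step 1: the tangent set.} Differentiating $W = W_0 + BA^\top$ at $(B,A)$ along $(\xi_B,\xi_A)\in T_{(B,A)}\mathcal{X}$ gives
\begin{equation}
  \mathrm{d}W = \xi_B A^\top + B\xi_A^\top .
\end{equation}
We must show that the set $\{\xi_B A^\top + B\xi_A^\top\}$, as $(\xi_B,\xi_A)$ ranges over $\R^{m\times r}\times\R^{n\times r}$, equals $\mathcal{S}_{(B,A)}$. Throughout we assume $B$ and $A$ have full column rank, as is implicit in the definition of $P_B$ and $P_A$.

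For the inclusion into $\mathcal{S}_{(B,A)}$: since $Q_A A = O$ and $Q_B B = O$, we get $Q_B(\xi_B A^\top)Q_A = Q_B \xi_B (Q_A A)^\top = O$ and $Q_B (B\xi_A^\top) Q_A = O$.

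For the reverse inclusion, take $\mathrm{d}W$ with $Q_B\,\mathrm{d}W\,Q_A = O$. Expanding $\mathrm{d}W = (P_B+Q_B)\,\mathrm{d}W\,(P_A+Q_A)$ leaves
\begin{equation}
  \mathrm{d}W = \mathrm{d}W\,P_A + P_B\,\mathrm{d}W\,Q_A .
\end{equation}
Choose $\xi_B = \mathrm{d}W\,A(A^\top A)^{-1}$, so that $\xi_B A^\top = \mathrm{d}W\,P_A$. Choose $\xi_A = Q_A\,\mathrm{d}W^\top B(B^\top B)^{-1}$, so that $B\xi_A^\top = P_B\,\mathrm{d}W\,Q_A$. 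Their sum is $\mathrm{d}W$. Hence $\mathcal{S}_{(B,A)}$ is exactly the image of the differential, and it is a linear subspace because it is the kernel of the linear map $X\mapsto Q_B X Q_A$.

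\textbf{Step 2: the projection.} Define the linear operator
\begin{equation}
  \Pi(X) = P_B X + X P_A - P_B X P_A .
\end{equation}
Using $P_B + Q_B = I$, it can be rewritten as
\begin{equation}
  \Pi(X) = X - Q_B X Q_A .
\end{equation}
To show that $\Pi$ is the Frobenius-orthogonal projection onto $\mathcal{S}_{(B,A)}$, we check three facts.
\begin{itemize}
  \item[(a)] $\Pi(X)\in\mathcal{S}_{(B,A)}$. Because $Q_B$ and $Q_A$ are idempotent, $Q_B(X - Q_B X Q_A)Q_A = Q_B X Q_A - Q_B X Q_A = O$.
  \item[(b)] $\Pi$ fixes $\mathcal{S}_{(B,A)}$. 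If $Q_B X Q_A = O$, then $\Pi(X) = X$.
  \item[(c)] The residual is orthogonal to $\mathcal{S}_{(B,A)}$. The residual is $X - \Pi(X) = Q_B X Q_A$. For any $Y\in\mathcal{S}_{(B,A)}$, the symmetry of $Q_B$ and $Q_A$ and the cyclic property of the trace give $\tr\big((Q_B X Q_A)^\top Y\big) = \tr(X^\top Q_B Y Q_A) = 0$.
\end{itemize}
Facts (a)–(c) together show that $\Pi$ is the orthogonal projection onto $\mathcal{S}_{(B,A)}$. Equivalently, by the Pythagorean identity, $\Pi(X)$ is the unique minimizer of $\|X-Y\|_F$ over $Y\in\mathcal{S}_{(B,A)}$.

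\textbf{Main obstacle.} The only nontrivial step is the surjectivity argument in Step 1, i.e., constructing $(\xi_B,\xi_A)$ that realizes an arbitrary element of $\mathcal{S}_{(B,A)}$. The explicit choice above settles it. Everything in Step 2 is routine algebra with idempotent symmetric projectors.
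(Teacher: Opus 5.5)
Your proof is correct and follows the paper's argument essentially step for step. You use the same differential and the same two inclusions (your splitting $\mathrm{d}W = \mathrm{d}W\,P_A + P_B\,\mathrm{d}W\,Q_A$ is the mirror image of the paper's $Y = P_B Y + Q_B Y P_A$), and the same key fact that the residual $Q_B X Q_A$ is Frobenius-orthogonal to $\mathcal{S}_{(B,A)}$, which the paper packages as a Pythagorean split of $\|X-Y\|_F^2$; making the full-column-rank assumption on $B$ and $A$ explicit, as you do, is a small improvement over the paper, which leaves it implicit.
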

This proof is provided in \cref{ap_sec:proof_projection}.
Then, we can define the $t$-th weight update via the projected full fine-tuning onto $\mathcal{S}_{(B_t,A_t)}$ as follows:
\begin{align}
    W_{t+1}^{\operatorname{FFT-Proj}} 
    &= W_t - \alpha \operatorname{Proj}_{\mathcal{S}_{(B_t,A_t)}}(\nabla_{W_t} \mathcal{L}) \\
    &= W_t - \alpha P_{B_t} (\nabla_{W_t} \mathcal{L}) - \alpha (\nabla_{W_t} \mathcal{L}) P_{A_t} 
    + \alpha P_{B_t} (\nabla_{W_t} \mathcal{L}) P_{A_t},
\end{align}
where $W_{t+1}^{\operatorname{FFT-Proj}}$ denotes $W_{t+1}^{\operatorname{FFT}}$ with the projection.
Obviously, this is identical to \cref{eq:W_(t+1)_our}.
Therefore, the $t$-th weight update via LoRA with our preconditioning, described in \cref{eq:W_(t+1)_our}, follows the direction closest to the gradient of full fine-tuning $\nabla_{W_t} \mathcal{L}$ within the subspace of first-order weight changes allowed by the LoRA parameterization, i.e., $\mathcal{S}_{(B_t,A_t)}$, compared with \cref{eq:W_(t+1)_LoRA,eq:W_(t+1)_LoRACP}.
Moreover, we show the following proposition.
\begin{proposition}\label{prop:resulting_full_weight_matrix}
    The $t$-th updated weight matrix of LoRA with \cref{eq:preconditioning_our}, i.e., $W_{t+1}^{\operatorname{LoRA-OP}}$, satisfies
    \begin{equation}
        \left\| W_{t+1}^{\operatorname{FFT}} - W_{t+1}^{\operatorname{LoRA-OP}} \right\|_F \leq \left\| W_{t+1}^{\operatorname{FFT}} - W_{t+1}^{\operatorname{LoRA}} \right\|_F,
    \end{equation}
    and its inequality is strict whenever $( B_t B_t^\top - P_{B_t}) (\nabla_{W_t} \mathcal{L}) + (\nabla_{W_t} \mathcal{L}) (A_t A_t^\top - P_{A_t})
    + P_{B_t} (\nabla_{W_t} \mathcal{L}) P_{A_t} \neq O$.
    Also, it satisfies 
    \begin{equation}
        \left\| W_{t+1}^{\operatorname{FFT}} - W_{t+1}^{\operatorname{LoRA-OP}} \right\|_F \leq \left\| W_{t+1}^{\operatorname{FFT}} - W_{t+1}^{\operatorname{LoRA-CP}} \right\|_F,
    \end{equation}
    and its inequality is strict whenever $P_{B_t} (\nabla_{W_t} \mathcal{L}) P_{A_t} \neq O$.
\end{proposition}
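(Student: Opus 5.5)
Write $G \coloneqq \nabla_{W_t}\mathcal{L}$, $P_B \coloneqq P_{B_t}$, $P_A \coloneqq P_{A_t}$, and let $\mathcal{S} \coloneqq \mathcal{S}_{(B_t,A_t)}$. The plan is to read everything off the first-order expressions \cref{eq:W_(t+1)_FFT}, \cref{eq:W_(t+1)_LoRA}, \cref{eq:W_(t+1)_LoRACP}, and \cref{eq:W_(t+1)_our}. Each difference $W_{t+1}^{\operatorname{FFT}} - W_{t+1}^{\cdot}$ is $-\alpha$ times a residual $G - D$, where $D$ is the respective update direction:
\begin{itemize}
\item $D_{\mathrm{OP}} = \operatorname{Proj}_{\mathcal{S}}(G)$, by \cref{prop:projection};
\item $D_{\mathrm{CP}} = P_B G + G P_A$;
\item $D_{\mathrm{LoRA}} = B_tB_t^\top G + G A_tA_t^\top$.
\end{itemize}
Since the factor $\alpha>0$ cancels, it suffices to compare $\|G - D\|_F$ for the three choices of $D$.

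The key step is a Pythagorean decomposition. I first check that $D_{\mathrm{CP}}$ and $D_{\mathrm{LoRA}}$ both lie in $\mathcal{S}$. By the definition in \cref{prop:projection}, this amounts to showing $Q_B D Q_A = O$. For the term $B_tB_t^\top G$ this holds because $Q_B B_t = O$, and for the term $G A_tA_t^\top$ it holds because $A_t^\top Q_A = O$.

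Since $\operatorname{Proj}_{\mathcal{S}}$ is the Frobenius-orthogonal projection, $G - D_{\mathrm{OP}} \perp \mathcal{S}$. For any $D\in\mathcal{S}$ we then have $D_{\mathrm{OP}} - D \in \mathcal{S}$, which gives
\begin{equation}
\|G - D\|_F^2 = \|G - D_{\mathrm{OP}}\|_F^2 + \|D_{\mathrm{OP}} - D\|_F^2 .
\end{equation}
Both inequalities follow at once, and equality holds iff $D = D_{\mathrm{OP}}$.

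The strictness conditions reduce to computing $D - D_{\mathrm{OP}}$ in each case:
\begin{itemize}
\item $D_{\mathrm{CP}} - D_{\mathrm{OP}} = P_B G P_A$, so the second inequality is strict iff $P_BGP_A\neq O$.
\item $D_{\mathrm{LoRA}} - D_{\mathrm{OP}} = (B_tB_t^\top - P_B)G + G(A_tA_t^\top - P_A) + P_BGP_A$. This is exactly the matrix appearing in the statement, so the first inequality is strict whenever it is nonzero.
\end{itemize}

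The main care point is not algebraic but interpretive. Every statement here is at first order in $\alpha$, with $O(\alpha^2)$ terms dropped as stipulated in \cref{sec:theoretical_insights}, so the inequalities should be read as holding for the linearized updates. I will state this explicitly. I will also make explicit the assumption that $B_t^\top B_t$ and $A_t^\top A_t$ are invertible, which is needed for $P_B$ and $P_A$ to be defined. Finally, I will double-check the membership $D_{\mathrm{LoRA}}\in\mathcal{S}$, since it is what allows the unpreconditioned case to be handled by the same Pythagorean argument.
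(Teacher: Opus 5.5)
Your proposal is correct and is essentially the paper's argument. Both rest on the same Pythagorean split $\|G-D\|_F^2=\|Q_{B_t}GQ_{A_t}\|_F^2+\|D_{\mathrm{OP}}-D\|_F^2$ (with your $G=\nabla_{W_t}\mathcal{L}$). You obtain the required orthogonality from \cref{prop:projection} together with $D_{\mathrm{CP}},D_{\mathrm{LoRA}}\in\mathcal{S}$, whereas the paper writes out the three differences and kills the cross terms directly using $Q_{B_t}P_{B_t}=Q_{B_t}(B_tB_t^\top-P_{B_t})=O$ and the analogous identities for $A_t$.
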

This proof is provided in \cref{ap_sec:proof_weight_comparison}.
\cref{prop:resulting_full_weight_matrix} indicates that our preconditioning \cref{eq:preconditioning_our} makes the $t$-th updated weight matrix of LoRA closer to that of full fine-tuning in terms of the Frobenius norm, compared with using \cref{eq:preconditioning_quotient} and no preconditioning.
We also empirically validate this proposition in \cref{ap_sec:experiment_for_proposition}.
From the above discussions, our preconditioning makes LoRA better approximates full fine-tuning, and therefore is plausible to further improve the performance of LoRA.


\section{Experiments}\label{sec:experiment}
To validate the effectiveness of our preconditioning \cref{eq:preconditioning_our} for LoRA, we conducted LoRA fine-tuning experiments on natural language generation, image classification, and commonsense reasoning tasks, following the existing LoRA studies \cite{hu2022lowrank,zhang2024riemann,wang2025lorapro,park2025stiefel,li2025stiefel}.
All experiments were performed in PyTorch \cite{pytorch} on a single H100 GPU with 80 GB VRAM.
We downloaded the pre-trained models using Hugging Face Transformers library \cite{HuggingFaceTransformer}.

\subsection{Methods for Comparison}
In the experiments, we implemented the two widely used optimization methods, the stochastic gradient descent (SGD) \cite{sgd} and the AdamW \cite{loshchilov2018adamw} methods, with different preconditioning:
(i) no preconditioning (i.e., the original SGD and AdamW methods), 
(ii) the conventional preconditioning \cref{eq:preconditioning_quotient}, and
(iii) our preconditioning~\cref{eq:preconditioning_our}.
For each method, preconditioning was applied at each gradient step.
Note that, as in prior Riemannian optimization approaches for LoRA~\cite{hu2022lowrank, zhang2024riemann,sun2025strongmixture,li2025stiefel,park2025stiefel}, we did not employ standard geometric operations such as the exponential map and parallel transport in the AdamW methods, since incorporating these operations would be computationally prohibitive for large-scale models.
Instead, we performed gradient preconditioning and then followed the standard AdamW scheme.
Following \cite{zhang2024riemann}, we replaced the computation of $(B^\top B)^{-1}$ and $(A^\top A)^{-1}$ with $(B^\top B + \lambda I)^{-1}$ and $(A^\top A + \lambda I)^{-1}$ where a small $\lambda > 0$ addresses the case when either $(B^\top B)^{-1}$ or $(A^\top A)^{-1}$ is not invertible.
For all experiments, we set $\lambda = 1.0 \times 10^{-6}$.
In the AdamW methods, we set the exponential decay rates $(\beta_1, \beta_2)$ to $(0.9, 0.999)$ for the original AdamW and $(0.7, 0.8)$ for others following the setup in \cite{zhang2024riemann}.
In our experiments, we tuned the learning rate of each method and followed other hyperparameter settings in previous works \cite{hu2022lowrank, zhang2024riemann, park2025stiefel, li2025stiefel, sun2025strongmixture}.
However, unlike these previous works that use only train and test sets in each dataset, we prepared train, validation, and test sets in each dataset, and then tuned the learning rates of each method on the validation set for a fair comparison.
We summarize the details of the datasets and hyperparameters in \cref{ap_sec:datasets,ap_sec:hyperparameter}.

\subsection{Natural Language Generation}\label{subsec:NLG}
Following \cite{hu2022lowrank,zhang2024riemann}, we conducted LoRA fine-tuning for the GPT-2 medium model \cite{radford2019gpt2} on E2E natural language generation challenge task \cite{novikova2017e2e}, which assesses the fine-tuned model's ability to generate fluent and semantically accurate sentences from meaning representations.
We followed the same experimental setup as in \cite{zhang2024riemann}, except that we tuned the learning rate over $\{0.02, 0.04, \dots, 0.30\}$ for SGD and $\{0.0002, 0.0004, \dots, 0.0030\}$ for AdamW methods on the validation set.
For all methods, we inserted the low-rank matrix $BA^\top$ into the query and value matrices of the attention layers in the GPT-2 model.
We also fixed the rank to $r = 4$, the scaling factor of $BA^\top$ to $32$, the batch size to $8$, the dropout to $0.1$, and the train epoch to $5$ with a linear learning-rate schedule.
We set the weight decay to $0.01$ for the SGD and AdamW methods.
We evaluated all methods by five metrics, BLEU \cite{papineni-etal-2002-bleu}, NIST \cite{NIST}, METEOR \cite{banerjee-lavie-2005-meteor}, ROUGE-L \cite{lin-2004-rouge}, and CIDEr \cite{cider}, on the test set.

\begin{table*}[t]
    \caption{
        Scores in LoRA fine-tuning for the GPT-2 medium model on E2E natural language generation challenge task with different preconditioning.
        ``w/'' denotes ``with''.
        Bold and underlined values indicate the highest and second-highest scores, respectively.
    }\label{table:gpt2}
    \centering
    \begin{tabular}[t]{lcccccccccc}
        \toprule
        Method
        & BLEU & NIST & METEOR & ROUGE-L & CIDEr \\
        \midrule
        SGD
        & 0.6698 & 8.5505 & 0.4479 & 0.6908 & 2.2996 \\
        w/ \cref{eq:preconditioning_quotient}
        & 0.6852 & 8.6371 & 0.4641 & 0.7097 & 2.4367 \\
        w/ \cref{eq:preconditioning_our} (Ours)
        & 0.6944 & 8.7449 & 0.4632 & 0.7127 & 2.5082 \\        
        \midrule
        \midrule
        AdamW
        & \underline{0.7020} & \underline{8.8433} & \underline{0.4676} & \underline{0.7193} & {\bf 2.5500} \\
        w/ \cref{eq:preconditioning_quotient}
        & 0.7003 & 8.8170 & 0.4671 & 0.7181 & 2.5312 \\
        w/ \cref{eq:preconditioning_our} (Ours)
        & {\bf 0.7047} & {\bf 8.8527} & {\bf 0.4697} & {\bf 0.7209} & \underline{2.5454} \\
        \bottomrule
    \end{tabular}
\end{table*}

\begin{table*}[t]
    \caption{
        Top-1 test accuracy in LoRA fine-tuning for the CLIP ViT-B/32 model on the seven image classification datasets with different preconditioning.
        ``Ave.'' denotes ``Average''.
    }\label{table:vit}
    \setlength{\tabcolsep}{5pt} 
    \centering
    \begin{tabular}[t]{lcccccccccc}
        \toprule
        Method
        & Cars & DTD & EuroSAT & GTRSB & RESISC45 & SUN397 & SVHN & Ave.\\
        \midrule
        AdamW
        & 71.82 & 70.53 & \underline{98.35} & 97.84 & 94.48 & 72.04 & 96.79 & 85.98\\
        w/ \cref{eq:preconditioning_quotient}
        & {\bf 72.14} & 71.28 & 98.19 & \underline{98.32} & \underline{94.49} & {\bf 72.64} & \underline{96.83} & \underline{86.27}\\
        w/ \cref{eq:preconditioning_our} (Ours)
        & \underline{72.03} & {\bf 71.86} & {\bf 98.52} & {\bf 98.54} & {\bf 94.70} & \underline{72.57} & {\bf 96.92} & {\bf 86.45}\\
        \bottomrule
    \end{tabular}
\end{table*}
\begin{table*}[t]
    \caption{
        Top-1 test accuracy in LoRA fine-tuning for the LLaMA2-7B model on the eight commonsense reasoning datasets with different preconditioning.
    }\label{table:cr}
    \setlength{\tabcolsep}{5pt} 
    \centering
    \begin{tabular}[t]{lcccccccccc}
        \toprule
        Method
        & BoolQ & PIQA & SIQA & HellaS & WinoG & ARC-e & ARC-c & OBQA & Ave.\\
        \midrule
        AdamW
        & 73.33 & 83.41 & {\bf 79.38} & 92.36 & \underline{83.74} & 85.77 & 72.87 & \underline{82.60} & 81.68 \\
        w/ \cref{eq:preconditioning_quotient}
        & {\bf 74.31} & {\bf 84.17} & 77.99 & {\bf 93.12} & 83.43 & \underline{86.28} & \underline{74.23} & 82.20 & \underline{81.97} \\
        w/ \cref{eq:preconditioning_our} (Ours)
        & \underline{73.55} & \underline{83.62} & \underline{79.02} & \underline{92.95} & {\bf 85.48} & {\bf 86.41} & {\bf 74.32} & {\bf 83.40} & {\bf 82.34} \\
        \bottomrule
    \end{tabular}
\end{table*}

\Cref{table:gpt2} shows the results.
We observed that our SGD and AdamW methods outperform other SGD and AdamW methods, respectively, across almost all metrics.
Especially, our AdamW method achieves the highest scores in four metrics and the second-highest in the remaining one.
These results validate the effectiveness of our preconditioning for LoRA on the language generation task.

\subsection{Image Classification}\label{subsec:image_classification}
Following \cite{li2025stiefel,chenghao2025greatagain}, we conducted LoRA fine-tuning for the CLIP ViT-B/32 \cite{dosovitskiy2021vit} on the seven image classification datasets, such as Cars \cite{Cars}, DTD \cite{DTD}, EuroSAT \cite{EuroSAT}, GTSRB \cite{GTSRB}, RESISC45 \cite{RESISC45}, SUN397 \cite{SUN397}, and SVHN \cite{SVHN}.
In these datasets, we measured the top-1 test accuracy.
We tuned the learning rate over $\{0.0001, 0.0002, \dots, 0.001\}$ for each method on the validation set in each dataset separately.
Because \citet{zhang2024whyneedadam} reported that ViT-base models heavily depend on the Adam-based optimizers, we here tested only the AdamW methods with preconditioning.
For all methods, we inserted the low-rank matrix $BA^\top$ into the query and value matrices of the attention layers in the vision model of CLIP ViT-B/32.
We also fixed the rank to $r = 16$, the scaling factor of $BA^\top$ to $16$, the batch size to $64$, the weight decay to zero, the dropout to $0.05$, and the train epoch to $5$ with a linear learning-rate schedule.

\Cref{table:vit} shows the results.
We observed that our AdamW method achieves either the highest or second-highest top-1 test accuracy across datasets, leading to the highest average.
These results validate the effectiveness of our preconditioning for LoRA on the image classification task.

\subsection{Commonsense Reasoning}\label{subsec:CR}
Following \cite{li2025stiefel,park2025stiefel,chenghao2025greatagain}, we conducted LoRA fine-tuning for the LLaMA2-7B \cite{touvron2023llama2openfoundation} on the commonsense reasoning benchmark, which assesses the reasoning capabilities of the fine-tuned model across the eight datasets, such as BoolQ \cite{clark-etal-2019-boolq}, PIQA \cite{piqa}, SIQA \cite{sap-etal-2019-siqa}, HellaS \cite{zellers-etal-2019-hellaswag}, WinoG \cite{winogrande}, ARC-e \cite{arc-e_and_arc_c}, ARC-c \cite{arc-e_and_arc_c}, and OBQA \cite{mihaylov-etal-2018-obqa}.
In these datasets, we measured the top-1 test accuracy.
For fair comparison, we first split a combined dataset of all train sets, i.e., Commonsense170K, into the train and validation sets, tuned the learning rate on the validation set, and then evaluated the fine-tuned model on the test set in each dataset.
In this experiment, we tested the Adam methods with preconditioning.
We tuned the learning rate over $\{0.0001, 0.0002, \dots, 0.001\}$ for each method on the validation set, similar to \cref{subsec:image_classification}.
For all methods, we inserted the low-rank matrix $BA^\top$ into the query, key, and value projections of the attention layers and the up and down projections of the feed-forward layers.
We also fixed the rank to $r = 32$, the scaling factor of $BA^\top$ to $64$, the batch size to $16$, the weight decay to zero, the dropout to $0.05$, and the train epoch to $3$ with a linear learning-rate schedule.

\Cref{table:cr} shows the results.
We observed that our AdamW method achieves either the highest or second-highest top-1 test accuracy across datasets, leading to the highest average.
These results validate the effectiveness of our preconditioning for LoRA on the commonsense reasoning task.

\subsection{Empirical Validation: Training Time Comparison}\label{subsec:training_time}
To validate the time complexity analysis in \cref{sec:implementation} empirically, we investigate how preconditioning affects training time.
For this purpose, we compared training times of the CLIP ViT-B/32 model on the seven image classification datasets when using the AdamW methods with different preconditioning.
The experimental setup was the same as that described in \cref{subsec:image_classification}.
The difference in training times mainly stems from dataset size and the computational cost of preconditioning.
\Cref{fig:computation_time} shows the results.
Each method shows the mean and standard deviation of training times over all learning rates we used for validation.
We observed that, although there are some differences due to constant time factors, the results are almost the same across all methods, including the AdamW with our preconditioning \cref{eq:preconditioning_our} as it admits the efficient implementation described in \cref{sec:implementation}.
Consequently, these results validate the time complexity analysis in \cref{sec:implementation} empirically and show the computational efficiency of our preconditioning \cref{eq:preconditioning_our}.

\begin{figure}
    \centering
    \includegraphics[width=1.0\linewidth]{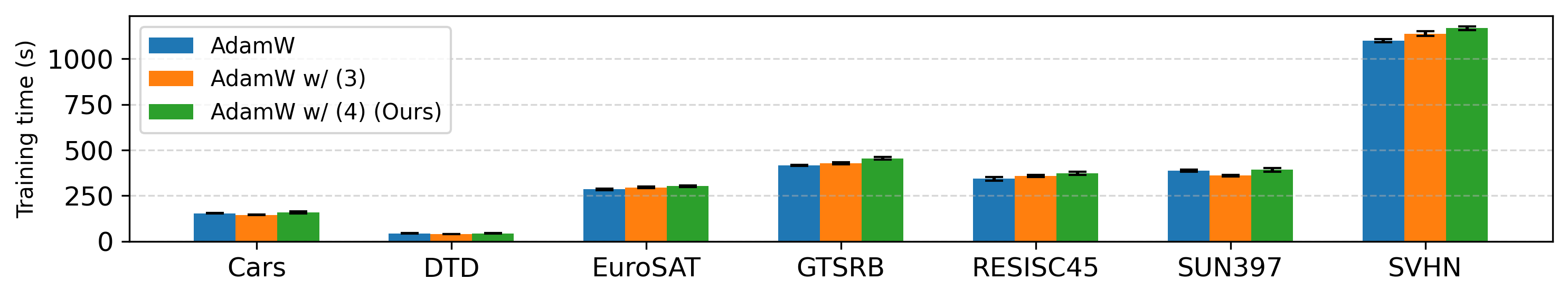}
    \vspace{-6mm}
    \caption{
        Training time comparison of the CLIP ViT-B/32 model on the seven image classification datasets when using the AdamW methods with different preconditioning.
        }
    \label{fig:computation_time}
\end{figure}

\section{Discussions and Limitations}\label{sec:discussion}
Throughout this paper, we confirmed that our preconditioning improves the performance of LoRA.
For further progress, we discuss the following future issues.
(I) As a first step, we tested our preconditioning in the widely used SGD and AdamW optimizers.
Extending our approach to other sophisticated optimizers is a promising direction for future research.
(II) Our method is based on the standard two-factor factorization of LoRA.
Combining other modeling, e.g., three-factor factorization \cite{wu2024mosLoRA,li2025stiefel,chenghao2025greatagain}, with our method is left for future work.
(III) Combining the state-of-the-art initialization strategies for LoRA with our method has excellent potential but is nontrivial because our approach may break those strategy concepts.
We will explore this potential in the future.
(IV) Our method is inspired by the first-order approximation of the weight update of full fine-tuning, which means each update is sufficiently small.
We must consider the higher-order effective and will design more efficient preconditioning approaches in the future.
(V) Testing our method on various models and tasks, e.g., image generation tasks for qualitative visual evaluation, is a promising direction for further assessing its practical utility.

\section{Conclusions}
In this paper, we proposed a Riemannian metric that is specifically tailored to LoRA to close the gap to full fine-tuning at the weight level.
This metric induces new preconditioning, and we theoretically showed that this preconditioning makes LoRA better approximates full fine-tuning compared with the conventional preconditioning and no preconditioning.
Diverse experiments showed the effectiveness and efficiency of our preconditioning in the language and vision domains.
This paper is the first to explore the possibility of tailoring a Riemannian metric to LoRA, paving the way for future followers.

\bibliographystyle{plainnat}
\bibliography{neurips2026}

\newpage
\appendix

\section{Proofs}
\subsection{Proof of \cref{lemma:horizontal_space}}\label{ap_sec:proof_horizontal_space}
With the equivalence relation \cref{eq:invariant_GL(r)}, the vertical space $V_{(B,A)}\mathcal{X}$ is given by 
\begin{equation}\label{ap_eq:vertical_space}
    V_{(B,A)}\mathcal{X} = \left\{ (-B\Omega, A\Omega^\top) \mid \Omega \in \R^{r \times r} \right\}.
\end{equation}
For details, see \cite[Sec. 3]{mishra2012riemanniangeometry}.
Then, the definition \cref{eq:vertical_horizontal_spaces} leads to the horizontal space
\begin{align}
    H_{(B,A)}\mathcal{X} 
    &= \{ (\xi_B,\xi_A) \in T_X \mathcal{X} \mid g_{(B,A)} \big( (\xi_B,\xi_A), (\zeta_B, \zeta_A) \big) = 0,\, \forall (\zeta_B, \zeta_A) \in V_{(B,A)}\mathcal{X} \}\\
    &= \{ (\xi_B,\xi_A) \in T_X \mathcal{X} \mid g_{(B,A)} \big( (\xi_B,\xi_A), (-B\Omega, A\Omega^\top) \big) = 0,\, \forall \Omega \in \R^{r \times r} \}.
\end{align}
With our metric \cref{eq:proposed_metric}, we obtain
\begin{align}
    g_{(B,A)} \big( (\xi_B,\xi_A), (-B\Omega, A\Omega^\top) \big)
    &= - \tr\left( A^\top A \xi_B^\top \left( I + P_B \right) B\Omega \right)
    + \tr\left( B^\top B \xi_A^\top \left( I + P_A \right) A\Omega^\top \right) \\
    &= - 2 \tr\left( A^\top A \xi_B^\top B\Omega \right)
    + 2 \tr\left( B^\top B \xi_A^\top A\Omega^\top \right)\\
    &= 2 \tr\left( \left( - B^\top \xi_B A^\top A + B^\top B \xi_A^\top A \right)^\top \Omega \right),
\end{align}
because $(I + P_B) B = 2B$ and $(I + P_A) A = 2A$.
For the above expression to be zero for all $\Omega$, it is necessary that
\begin{equation}
    B^\top \xi_B A^\top A = B^\top B \xi_A^\top A.
\end{equation}
Thus, we obtain the horizontal space
\begin{equation}
    H_{(B,A)}\mathcal{X} = \left\{ (\xi_B, \xi_A) \in T_{(B,A)} \mathcal{X} \mid B^\top \xi_B A^\top A = B^\top B \xi_A^\top A \right\},
\end{equation}
which is identical to \cref{eq:horizon_space}.

\subsection{Proof of \cref{prop:our_metric_is_invariant}}\label{ap_sec:proof_invariant}
With the uniqueness of the horizontal lifts described in \cref{eq:horizontal_lift}, the horizontal lifts $(\xi_B, \xi_A) \in H_{(B, A)}\mathcal{X}$ and $(\xi_{BG^{-1}}, \xi_{AG^{\top}}) \in H_{(BG^{-1}, AG^{\top})}\mathcal{X}$ have the relationship
\begin{equation}\label{ap_eq:directional_derivative_1}
    \mathrm{D} \pi(B,A) [(\xi_B, \xi_A)] 
    = \mathrm{D} \pi(BG^{-1}, AG^{\top}) [(\xi_{BG^{-1}}, \xi_{AG^{\top}})],
\end{equation}
where $\pi : \mathcal{X} \rightarrow \mathcal{X}/\mathrm{GL}(r)$ is a natural projection.
By \cref{eq:horizon_space}, $(\xi_{BG^{-1}}, \xi_{AG^{\top}}) \in H_{(BG^{-1}, AG^{\top})}\mathcal{X}$ satisfies the horizontal space condition
\begin{align}\label{ap_eq:condition_for_horizon}
    G^{-\top} B^\top (\xi_{BG^{-1}}) G A^\top A G^\top = G^{-\top} B^\top B G^{-1} (\xi_{AG^{\top}}^\top) AG^{\top},
\end{align}
where $G^{-\top} \coloneqq (G^{\top})^{-1} = (G^{-1})^{\top}$.
Given a fixed $G \in \mathrm{GL}(r)$, define a linear map $\phi : (B,A) \mapsto (BG^{-1}, AG^{\top})$.
Then, $[(B,A)] = \pi(B,A) = \pi(\phi(B,A))$ and, by the chain rule,
\begin{align}
    \mathrm{D} \pi(B,A) [(\xi_B, \xi_A)] 
    &= \mathrm{D} \pi(\phi(B,A)) [\mathrm{D} \phi(B,A) (\xi_B, \xi_A)]\\
    &= \mathrm{D} \pi(BG^{-1}, AG^{\top}) [(\xi_B G^{-1}, \xi_A G^{\top})].\label{ap_eq:directional_derivative_2}
\end{align}
The pair $(\xi_B G^{-1}, \xi_A G^{\top})$ satisfies the horizontal space condition \cref{ap_eq:condition_for_horizon} because
\begin{align}
    &G^{-\top} B^\top (\xi_B G^{-1}) G A^\top A G^\top 
    = G^{-\top} \left( B^\top \xi_B A^\top A \right) G^\top, \\
    &G^{-\top} B^\top B G^{-1} (\xi_A G^{\top})^\top AG^{\top}
    = G^{-\top} \left( B^\top B  \xi_A^\top A \right) G^{\top}
    = G^{-\top} \left( B^\top \xi_B A^\top A \right) G^\top \quad (\because \cref{eq:horizon_space}).
\end{align}
Thus, $(\xi_B G^{-1}, \xi_A G^{\top})$ is also in the horizontal space $H_{(BG^{-1}, AG^{\top})}\mathcal{X}$. 
Because \cref{ap_eq:directional_derivative_1,ap_eq:directional_derivative_2}, and the uniqueness of the horizontal lifts, we obtain $(\xi_{BG^{-1}}, \xi_{AG^{\top}}) = (\xi_{B} G^{-1}, \xi_{A} G^{\top})$.
Considering $P_{BG^{-1}} = P_B$ and $P_{AG^\top} = P_A$, our metric \cref{eq:proposed_metric} yields
\begin{align}
    &g_{(BG^{-1}, AG^{\top})} \big( (\xi_{BG^{-1}},\xi_{AG^{\top}}), (\zeta_{BG^{-1}},\zeta_{AG^{\top}}) \big) \\
    & = \tr\left( (AG^\top)^\top AG^\top (\xi_{B} G^{-1})^\top \left( I + P_B \right) \zeta_{B} G^{-1} \right)\\
    &\qquad + \tr\left( (BG^{-1})^\top BG^{-1} (\xi_A G^\top)^\top \left( I + P_A \right) \zeta_A G^\top \right)\\
    & = \tr\left( A^\top A \xi_{B}^\top \left( I + P_B \right) \zeta_{B} \right)
    + \tr\left( B^\top B \xi_A^\top \left( I + P_A \right) \zeta_A \right)\\
    &= g_{(B,A)} \big( (\xi_B,\xi_A), (\zeta_B,\zeta_A) \big).
\end{align}
Therefore, our metric \cref{eq:proposed_metric} is invariant under the equivalence relation \cref{eq:invariant_GL(r)}.

\subsection{Proof of \cref{prop:projection}}\label{ap_sec:proof_projection}
The first-order change in (i.e., differential of) $W = W_0 + BA^\top$ with respect to $B$ and $A$, denoted as $\mathrm{d} W$, is given by  
\begin{equation}
    \mathrm{d} W = B(\mathrm{d} A)^\top + (\mathrm{d} B)A^\top,
\end{equation}
where $\mathrm{d} A \in \R^{n \times r}$ and $\mathrm{d} B \in \R^{m \times r}$.
Therefore, the first-order weight changes allowed by the LoRA parameterization \cref{eq:modeling_lora} form the subspace 
\begin{equation}\label{ap_eq:subsapce}
    \mathcal{S}_{(B,A)} = \{ B(\mathrm{d} A)^\top + (\mathrm{d} B)A^\top \mid \mathrm{d} A \in \R^{n \times r},\, \mathrm{d} B \in \R^{m \times r} \}.
\end{equation}
We here show the following lemma.
\begin{lemma}\label{ap_lemma:subspace_another_form}
    Let $\mathcal{S}_1 = \mathcal{S}_{(B,A)}$ and $\mathcal{S}_2 = \{ Y \in \R^{m \times n} \mid Q_B Y Q_A = O\}$.
    Then, $\mathcal{S}_1 = \mathcal{S}_2$.
\end{lemma}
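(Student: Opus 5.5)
We prove the two inclusions $\mathcal{S}_1 \subseteq \mathcal{S}_2$ and $\mathcal{S}_2 \subseteq \mathcal{S}_1$ directly. Throughout we use that $B$ and $A$ have full column rank, as already assumed so that $P_B$ and $P_A$ are defined. Consequently $Q_B B = O$ and $A^\top Q_A = O$.

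\emph{Inclusion $\mathcal{S}_1 \subseteq \mathcal{S}_2$.} Take $Y = B(\mathrm{d} A)^\top + (\mathrm{d} B)A^\top$. Then
\begin{equation}
Q_B Y Q_A = (Q_B B)(\mathrm{d} A)^\top Q_A + Q_B (\mathrm{d} B)(A^\top Q_A) = O.
\end{equation}
This direction is immediate.

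\emph{Inclusion $\mathcal{S}_2 \subseteq \mathcal{S}_1$.} Let $Y$ satisfy $Q_B Y Q_A = O$. Insert the identity $I = P_B + Q_B$ on the left and $I = P_A + Q_A$ on the right, as in the derivation of \cref{eq:W_(t+1)_FFT}. This gives
\begin{equation}
Y = P_B Y + Q_B Y P_A + Q_B Y Q_A = P_B Y + Q_B Y P_A .
\end{equation}
The next step is to exhibit each term in the form required by \cref{ap_eq:subsapce}.
\begin{itemize}
\item For the first term, $P_B Y = B\big((B^\top B)^{-1}B^\top Y\big)$. So it equals $B(\mathrm{d} A)^\top$ with $\mathrm{d} A = Y^\top B (B^\top B)^{-1} \in \R^{n\times r}$.
\item For the second term, $Q_B Y P_A = \big(Q_B Y A (A^\top A)^{-1}\big) A^\top$. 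So it equals $(\mathrm{d} B)A^\top$ with $\mathrm{d} B = Q_B Y A (A^\top A)^{-1} \in \R^{m\times r}$.
\end{itemize}
Adding the two terms shows $Y \in \mathcal{S}_1$.

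The only mildly delicate point is the decomposition step. One has to split $Y$ asymmetrically, as $P_B Y$ plus $Q_B Y P_A$, so that no cross term is counted twice. A symmetric split $P_B Y + Y P_A - P_B Y P_A$ would also work, by absorbing $-P_B Y P_A$ into either factor, but the asymmetric form makes the choice of $\mathrm{d} A$ and $\mathrm{d} B$ explicit.

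Apart from this, the argument is routine linear algebra and relies only on the invertibility of $B^\top B$ and $A^\top A$. The same decomposition also suggests the projection formula of \cref{prop:projection}, which will follow by checking that $X - (P_B X + X P_A - P_B X P_A) = Q_B X Q_A$ is Frobenius-orthogonal to $\mathcal{S}_2$.
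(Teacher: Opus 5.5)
Your proof is correct and follows the paper's argument essentially verbatim. For $\mathcal{S}_1 \subseteq \mathcal{S}_2$ you use $Q_B B = O$ and $A^\top Q_A = O$, and for the reverse inclusion you use the decomposition $Y = P_B Y + Q_B Y P_A$ with the same choices $\mathrm{d}A = Y^\top B (B^\top B)^{-1}$ and $\mathrm{d}B = Q_B Y A (A^\top A)^{-1}$ as the paper.
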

\begin{proof}
    First, we show $\mathcal{S}_1 \subset \mathcal S_2$.
    Let $Y = B(\mathrm{d} A)^\top + (\mathrm{d} B)A^\top \in \mathcal{S}_1$.
    Then, it holds that $Q_B Y Q_A = O$ because $Q_B B = O$ and $A^\top Q_A = O$.
    This indicates that $Y$ can be written in the form defining $\mathcal{S}_2$, and thus $Y \in \mathcal{S}_2$.
    Therefore, $\mathcal{S}_1 \subset \mathcal{S}_2$.
    Next, we show $\mathcal{S}_2 \subset \mathcal S_1$.
    Let $Y \in \mathcal{S}_2$, and it satisfies 
    \begin{equation}
        Q_B Y Q_A = (I - P_B) Y (I - P_A) = Y - P_B Y - Y P_A + P_B Y P_A = O.
    \end{equation}
    Thus, $Y$ can be written as
    \begin{equation}
        Y = P_B Y + Y P_A - P_B Y P_A = P_B Y + (I - P_B) Y P_A.
    \end{equation}
    Consider $P_B = B (B^\top B)^{-1} B^\top$ and $P_A = A (A^\top A)^{-1} A^\top$, and define
    \begin{equation}
    \mathrm{d} A \coloneqq Y^\top B (B^\top B)^{-1},
    \qquad
    \mathrm{d} B \coloneqq (I - P_B) Y A (A^\top A)^{-1}.
    \end{equation}
    Then, we get 
    \begin{equation}
        P_B Y = B(\mathrm{d} A)^\top, \qquad (I - P_B) Y P_A = (\mathrm{d} B) A^\top,
    \end{equation}
    and hence
    \begin{equation}
        Y = B(\mathrm{d} A)^\top + (\mathrm{d} B) A^\top.
    \end{equation}
    Thus, $Y$ can be written in the form defining $\mathcal{S}_1$, and thus $Y \in \mathcal{S}_1$.
    Therefore, $\mathcal{S}_2 \subset \mathcal{S}_1$.
    From the above, we conclude $\mathcal{S}_1 = \mathcal{S}_2$.
\end{proof}
Therefore, using $\mathrm{d} W \in \R^{m \times n}$, \cref{ap_eq:subsapce} can be rewritten as 
\begin{equation}
    \mathcal{S}_{(B,A)} = \{ \mathrm{d} W \in \R^{m \times n} \mid Q_B (\mathrm{d} W) Q_A = O\}.
\end{equation}
Next, we consider the orthogonal projection of $X \in \R^{m \times n}$ onto the subspace $\mathcal{S}_{(B,A)}$ in terms of the Frobenius norm, defined as
\begin{equation}
    \operatorname{Proj}_{\mathcal{S}_{(B,A)}}(X) \coloneqq \argmin_{Y \in \mathcal{S}_{(B,A)}} \| X - Y \|_{F}^2.
\end{equation}
Considering $X = (P_B + Q_B) X (P_A + Q_A)$, we obtain 
\begin{align}
    \| X - Y \|_{F}^2 
    &= \| P_B X + X P_A - P_B X P_A + Q_B X Q_A - Y \|_{F}^2 \\
    &= \| P_B X + X P_A - P_B X P_A - Y \|_{F}^2 + \| Q_B X Q_A \|_{F}^2
\end{align}
because $P_B X + X P_A - P_B X P_A$ is orthogonal to $Q_B X Q_A$, and $Y$ is also orthogonal to $Q_B X Q_A$ by \cref{ap_lemma:subspace_another_form}, so that the corresponding traces are zeros.
Thus, we get 
\begin{align}
    \operatorname{Proj}_{\mathcal{S}_{(B,A)}}(X) 
    &= \argmin_{Y \in \mathcal{S}_{(B,A)}} \| X - Y \|_{F}^2 \\
    &= \argmin_{Y \in \mathcal{S}_{(B,A)}} \| P_B X + X P_A - P_B X P_A - Y \|_{F}^2 \\
    &= P_B X + X P_A - P_B X P_A.
\end{align}
Obviously, $P_B X + X P_A - P_B X P_A \in \mathcal{S}_{(B,A)}$ because $Q_B (P_B X + X P_A - P_B X P_A) Q_A = O$.

\subsection{Proof of \cref{prop:resulting_full_weight_matrix}}\label{ap_sec:proof_weight_comparison}
From \cref{eq:W_(t+1)_FFT,eq:W_(t+1)_LoRA,eq:W_(t+1)_LoRACP,eq:W_(t+1)_our}, we get
\begin{align}
    &W_{t+1}^{\operatorname{FFT}} - W_{t+1}^{\operatorname{LoRA}}
    = \alpha \big( P_{B_t}' (\nabla_{W_t} \mathcal{L}) + (\nabla_{W_t} \mathcal{L}) P_{A_t}' + P_{B_t} (\nabla_{W_t} \mathcal{L}) P_{A_t} \big) - \alpha Q_{B_t} (\nabla_{W_t} \mathcal{L}) Q_{A_t},\\
    &W_{t+1}^{\operatorname{FFT}} - W_{t+1}^{\operatorname{LoRA-CP}}
    = \alpha P_{B_t} (\nabla_{W_t} \mathcal{L}) P_{A_t} - \alpha Q_{B_t} (\nabla_{W_t} \mathcal{L}) Q_{A_t},\\
    &W_{t+1}^{\operatorname{FFT}} - W_{t+1}^{\operatorname{LoRA-OP}}
    = - \alpha Q_{B_t} (\nabla_{W_t} \mathcal{L}) Q_{A_t},
\end{align}
where $P_{B_t}' \coloneqq B_t B_t^\top - P_{B_t}$ and $P_{A_t}' \coloneqq A_t A_t^\top - P_{A_t}$ for simplicity here.
Because $Q_{B_t} P_{B_t} = Q_{B_t} P_{B_t}' = O$ and $Q_{A_t} P_{A_t} = Q_{A_t} P_{A_t}' = O$, the trace of any matrix expression involving these products is zero.
Then, we get
\begin{align}
    &\left\| W_{t+1}^{\operatorname{FFT}} - W_{t+1}^{\operatorname{LoRA}} \right\|_F^2 = \alpha^2 \left\| P_{B_t}' (\nabla_{W_t} \mathcal{L}) + (\nabla_{W_t} \mathcal{L}) P_{A_t}' + P_{B_t} (\nabla_{W_t} \mathcal{L}) P_{A_t} \right\|_F^2 + \alpha^2 \left\| Q_{B_t} (\nabla_{W_t} \mathcal{L}) Q_{A_t} \right\|_F^2, \\
    &\left\| W_{t+1}^{\operatorname{FFT}} - W_{t+1}^{\operatorname{LoRA-CP}} \right\|_F^2 = \alpha^2 \left\| P_{B_t} (\nabla_{W_t} \mathcal{L}) P_{A_t} \right\|_F^2 +
    \alpha^2 \left\| Q_{B_t} (\nabla_{W_t} \mathcal{L}) Q_{A_t} \right\|_F^2, \\
    &\left\| W_{t+1}^{\operatorname{FFT}} - W_{t+1}^{\operatorname{LoRA-OP}} \right\|_F^2 = \alpha^2 \left\| Q_{B_t} (\nabla_{W_t} \mathcal{L}) Q_{A_t} \right\|_F^2.
\end{align}
Therefore,
\begin{align}
    &\left\| W_{t+1}^{\operatorname{FFT}} - W_{t+1}^{\operatorname{LoRA}} \right\|_F^2 - \left\| W_{t+1}^{\operatorname{FFT}} - W_{t+1}^{\operatorname{LoRA-OP}} \right\|_F^2 \\
    &\quad = \alpha^2 \left\| P_{B_t}' (\nabla_{W_t} \mathcal{L}) + (\nabla_{W_t} \mathcal{L}) P_{A_t}' + P_{B_t} (\nabla_{W_t} \mathcal{L}) P_{A_t} \right\|_F^2 \geq 0.
\end{align}
Thus, 
\begin{align}
    \left\| W_{t+1}^{\operatorname{FFT}} - W_{t+1}^{\operatorname{LoRA-OP}} \right\|_F \leq \left\| W_{t+1}^{\operatorname{FFT}} - W_{t+1}^{\operatorname{LoRA}} \right\|_F,
\end{align}
and its inequality is strict whenever $P_{B_t}' (\nabla_{W_t} \mathcal{L}) + (\nabla_{W_t} \mathcal{L}) P_{A_t}' + P_{B_t} (\nabla_{W_t} \mathcal{L}) P_{A_t} \neq O$.
Also, 
\begin{align}
    \left\| W_{t+1}^{\operatorname{FFT}} - W_{t+1}^{\operatorname{LoRA-CP}} \right\|_F^2 - \left\| W_{t+1}^{\operatorname{FFT}} - W_{t+1}^{\operatorname{LoRA-OP}} \right\|_F^2
    = \alpha^2 \left\| P_{B_t} (\nabla_{W_t} \mathcal{L}) P_{A_t} \right\|_F^2 \geq 0.
\end{align}
Thus,
\begin{equation}
    \left\| W_{t+1}^{\operatorname{FFT}} - W_{t+1}^{\operatorname{LoRA-OP}} \right\|_F \leq \left\| W_{t+1}^{\operatorname{FFT}} - W_{t+1}^{\operatorname{LoRA-CP}} \right\|_F,
\end{equation}
and its inequality is strict whenever $P_{B_t} (\nabla_{W_t} \mathcal{L}) P_{A_t} \neq O$.

Note that 
\begin{align}
    &\left\| W_{t+1}^{\operatorname{FFT}} - W_{t+1}^{\operatorname{LoRA}} \right\|_F^2 - \left\| W_{t+1}^{\operatorname{FFT}} - W_{t+1}^{\operatorname{LoRA-CP}} \right\|_F^2 \\
    & = \alpha^2 \left\| P_{B_t}' (\nabla_{W_t} \mathcal{L}) + (\nabla_{W_t} \mathcal{L}) P_{A_t}' + P_{B_t} (\nabla_{W_t} \mathcal{L}) P_{A_t} \right\|_{F}^2 - \alpha^2 \left\| P_{B_t} (\nabla_{W_t} \mathcal{L}) P_{A_t} \right\|_{F}^2 \\
    & = \alpha^2 \left\| P_{B_t}' (\nabla_{W_t} \mathcal{L}) + (\nabla_{W_t} \mathcal{L}) P_{A_t}' \right\|_{F}^2 + 2 \alpha^2 \tr\left( \left( P_{B_t}' (\nabla_{W_t} \mathcal{L}) + (\nabla_{W_t} \mathcal{L}) P_{A_t}' \right)^\top P_{B_t} (\nabla_{W_t} \mathcal{L}) P_{A_t} \right).
\end{align}
This second trace term can be either positive or negative, and thus its sign (and hence the ordering) cannot be determined in general.

\section{Derivations}
In this section, we use the chain rule $\nabla_{B} \mathcal{L} = (\nabla_{W} \mathcal{L}) A$ and $\nabla_{A} \mathcal{L} = (\nabla_{W} \mathcal{L})^\top B$, and the relationship $W_t = W_0 + B_t A_t^\top$.

\subsection{Deviation of \cref{eq:grad_our}}\label{ap_subsec:deviation_our_riemannian_grad}
Under the LoRA case defined in \cref{eq:lora}, \cref{eq:def_rieman_grad} can be written as
\begin{equation}\label{apeq:def_rieman_grad}
    \mathrm{D}\mathcal{L}(W_0 + BA^\top)[(\xi_{B},\xi_A)] 
    = g_{(B,A)} \big( (\operatorname{grad}_{B} \mathcal{L}, \operatorname{grad}_{A} \mathcal{L}), (\xi_B,\xi_A) \big),
\end{equation}
for all $(\xi_B, \xi_A) \in T_{(B,A)} \mathcal{X}$.
The left side of \cref{apeq:def_rieman_grad} is written as
\begin{align}
    &\mathrm{D}\mathcal{L}(W_0 + BA^\top)[(\xi_{B},\xi_A)] \\
    &= \lim_{h \rightarrow 0}
    \frac{
        \mathcal{L}( W_0 + (B + h \xi_{B}) (A + h \xi_{A})^\top ) - \mathcal{L}( W_0 + BA^\top )
    }{
        h
    }\\
    &= \lim_{h \rightarrow 0}
    \frac{
        \mathcal{L}( W_0 + BA^\top ) 
        + h \inner{
                \nabla_{W}\mathcal{L}
            }{
                B \xi_{A}^\top + \xi_{B} A^\top
            }
        + O(h^2)
        -  \mathcal{L}( W_0 + BA^\top ) 
    }{
        h
    }\\
    &= \inner{
                \nabla_{W}\mathcal{L}
            }{
                B \xi_{A}^\top + \xi_{B} A^\top
        }\\
    &= \tr\left(
        ((\nabla_{W}\mathcal{L}) A)^{\top} \xi_{B} \right)
        + \tr\left( B^\top (\nabla_{W}\mathcal{L})\xi_{A} 
    \right)\\
    &= \tr\left((\nabla_{B}\mathcal{L})^{\top} \xi_{B}\right) + \tr\left( (\nabla_{A}\mathcal{L})^\top \xi_{A} \right) \\
    &= \inner*{(\nabla_{B}\mathcal{L}, \nabla_{A}\mathcal{L})}{(\xi_B, \xi_A)}. \label{apeq:leftside}
\end{align}
With our metric \cref{eq:proposed_metric}, the right side of \cref{apeq:def_rieman_grad} is written as
\begin{align}
    &g_{(B,A)} \big( (\operatorname{grad}_{B} \mathcal{L}, \operatorname{grad}_{A} \mathcal{L}), (\xi_B,\xi_A) \big)\\
    &= \tr\left( A^\top A (\operatorname{grad}_{B} \mathcal{L})^\top \left( I + P_B \right) \xi_B \right)
    + \tr\left( B^\top B (\operatorname{grad}_{A} \mathcal{L})^\top \left( I + P_A \right) \xi_A \right) \\
    &= \inner*{ \left( (I + P_B) (\operatorname{grad}_{B} \mathcal{L}) A^\top A,\, (I + P_A) (\operatorname{grad}_{A} \mathcal{L}) B^\top B \right) }{ (\xi_B, \xi_A) }.\label{apeq:rightside}
\end{align}
Therefore, \cref{apeq:def_rieman_grad} is rewritten as
\begin{gather}
    \inner*{(\nabla_{B}\mathcal{L} - (I + P_B) (\operatorname{grad}_{B} \mathcal{L}) A^\top A,\, \nabla_{A}\mathcal{L} - (I + P_A) (\operatorname{grad}_{A} \mathcal{L}) B^\top B)}{(\xi_B, \xi_A)} = 0,
\end{gather}
for all $(\xi_B, \xi_A) \in T_{(B,A)} \mathcal{X}$.
Because $T_{(B,A)} \mathcal{X} = T_{(B,A)} (\R^{m \times r} \times \R^{n \times r}) = \R^{m \times r} \times \R^{n \times r}$, it is necessary that
\begin{equation}
    \nabla_{B}\mathcal{L} - (I + P_B) (\operatorname{grad}_{B} \mathcal{L}) A^\top A = O, \quad \nabla_{A}\mathcal{L} - (I + P_A) (\operatorname{grad}_{A} \mathcal{L}) B^\top B = O.
\end{equation}
Considering $(I + P_B)^{-1} = (I -\tfrac{1}{2} P_B)$ and $(I + P_A)^{-1} = (I -\tfrac{1}{2} P_A)$, we obtain
\begin{equation}
    \operatorname{grad}_{B} \mathcal{L} = (I - \tfrac{1}{2}P_B) \nabla_{B}\mathcal{L} (A^\top A)^{-1},\quad
    \operatorname{grad}_{A} \mathcal{L} =  (I - \tfrac{1}{2} P_A) \nabla_{A}\mathcal{L} (B^\top B)^{-1}.
\end{equation}
This Riemannian gradient $(\operatorname{grad}_{B} \mathcal{L}, \operatorname{grad}_{A} \mathcal{L})$ lies in the horizontal space $H_{(B,A)}\mathcal{X}$ defined by \cref{eq:horizon_space} because it satisfies the horizontal space condition $B^\top (\operatorname{grad}_{B} \mathcal{L}) A^\top A = B^\top B (\operatorname{grad}_{A} \mathcal{L})^\top A$:
\begin{align}
    B^\top (\operatorname{grad}_{B} \mathcal{L}) A^\top A 
    &= \frac{1}{2} B^\top (\nabla_{B}\mathcal{L})
    = \frac{1}{2} B^\top (\nabla_{W}\mathcal{L}) A,\\
    B^\top B (\operatorname{grad}_{A} \mathcal{L})^\top A 
    &= \frac{1}{2} (\nabla_{A}\mathcal{L})^\top A
    = \frac{1}{2} B^\top (\nabla_{W}\mathcal{L}) A.
\end{align}

\subsection{Derivation of \cref{eq:W_(t+1)_LoRACP}}\label{ap_sec:derivation_of_W_LoRACP}
\begin{align}
    W_{t+1}^{\operatorname{LoRA-CP}}
    &= W_0 + \left(B_{t} - \alpha (\nabla_{B_t} \mathcal{L}) (A_t^\top A_t)^{-1} \right) \left(A_{t} - \alpha (\nabla_{A_t} \mathcal{L}) (B_t^\top B_t)^{-1} \right)^\top\\
    &= W_0 + B_t A_t^\top -\alpha B_t (B_t^\top B_t)^{-1} (\nabla_{A_t} \mathcal{L})^\top - \alpha  (\nabla_{B_t} \mathcal{L}) (A_t^\top A_t)^{-1} A_t^\top + O(\alpha^2)\\
    &= W_t -\alpha B_t (B_t^\top B_t)^{-1} B_t^\top (\nabla_{W_t} \mathcal{L}) - \alpha  (\nabla_{W_t} \mathcal{L}) A_t (A_t^\top A_t)^{-1} A_t^\top + O(\alpha^2)\\
    &= W_t - \alpha P_{B_t} (\nabla_{W_t} \mathcal{L}) - \alpha (\nabla_{W_t} \mathcal{L}) P_{A_t} + O(\alpha^2)\\
    &\approx W_t - \alpha P_{B_t} (\nabla_{W_t} \mathcal{L}) - \alpha (\nabla_{W_t} \mathcal{L}) P_{A_t}.
\end{align}

\subsection{Derivation of \cref{eq:W_(t+1)_our}}\label{ap_sec:derivation_of_W_our}
\begin{align}
    W_{t+1}^{\operatorname{LoRA-OP}}
    &= W_0 + \left(B_{t} - \alpha ( I - \tfrac{1}{2} P_{B_t} ) (\nabla_{B_t} \mathcal{L}) (A_t^\top A_t)^{-1} \right) \left(A_{t} - \alpha ( I - \tfrac{1}{2} P_{A_t} ) (\nabla_{A_t} \mathcal{L}) (B_t^\top B_t)^{-1} \right)^\top\\
    &= W_0 + B_t A_t^\top -\alpha B_t (B_t^\top B_t)^{-1} (\nabla_{A_t} \mathcal{L})^\top - \alpha  (\nabla_{B_t} \mathcal{L}) (A_t^\top A_t)^{-1} A_t^\top \\
    & \qquad 
    + \frac{\alpha}{2} P_{B_t}  (\nabla_{B_t} \mathcal{L}) (A_t^\top A_t)^{-1} A_t^\top 
    + \frac{\alpha}{2} B_t (B_t^\top B_t)^{-1} (\nabla_{A_t} \mathcal{L})^\top P_{A_t} 
    + O(\alpha^2)
    \\
    &= W_t -\alpha B_t (B_t^\top B_t)^{-1} B_t^\top (\nabla_{W_t} \mathcal{L}) - \alpha  (\nabla_{W_t} \mathcal{L}) A_t (A_t^\top A_t)^{-1} A_t^\top \\
    & \qquad 
    + \frac{\alpha}{2} P_{B_t} (\nabla_{W_t} \mathcal{L}) A_t (A_t^\top A_t)^{-1} A_t^\top 
    + \frac{\alpha}{2} B_t (B_t^\top B_t)^{-1} B_t^\top (\nabla_{W_t} \mathcal{L}) P_{A_t} 
    + O(\alpha^2)\\
    &= W_t - \alpha P_{B_t} (\nabla_{W_t} \mathcal{L}) - \alpha (\nabla_{W_t} \mathcal{L}) P_{A_t} + \alpha P_{B_t} (\nabla_{W_t} \mathcal{L}) P_{A_t} + O(\alpha^2)\\
    &\approx W_t - \alpha P_{B_t} (\nabla_{W_t} \mathcal{L}) - \alpha (\nabla_{W_t} \mathcal{L}) P_{A_t} + \alpha P_{B_t} (\nabla_{W_t} \mathcal{L}) P_{A_t}.
\end{align}

\section{Datasets}\label{ap_sec:datasets}
\subsection{Natural Language Generation}
In \cref{subsec:NLG}, we used the E2E dataset for the natural language generation challenge proposed in \cite{novikova2017e2e}.
This dataset provides information about restaurants and consists of 51,426 dialogue-act-based meaning representations, each with an average of 8.1 reference sentences.
The dataset is split into train, validation, and test sets, with 42,061, 4,672, and 4,693 instances, respectively.
This splitting keeps a similar distribution of meaning representation and reference text lengths on each set.

\subsection{Image Classification}
In \cref{subsec:image_classification}, we used the seven image classification datasets, such as Cars \cite{Cars}, DTD \cite{DTD}, EuroSAT \cite{EuroSAT}, GTSRB \cite{GTSRB}, RESISC45 \cite{RESISC45}, SUN397 \cite{SUN397}, and SVHN \cite{SVHN}.
These datasets are split into the train and test sets, except for EuroSAT and RESISC45.
For such datasets, we first split the train set into the train and validation sets in a 9-1 ratio, and then prepared the train, validation, and test sets.
EuroSAT consists of 27,000 samples, and we split it into the train, validation, and test sets in a 6-1-3 ratio.
RESISC45 originally provides the train, validation, and test sets.
Thus, we used these sets.
Note that we used $\operatorname{train\_test\_split}$ function with the $\operatorname{stratify}$ parameter in the scikit-learn library \cite{scikit-learn} for the splitting because it can ensure the same proportion of classes (or labels) in the split sets.
We summarize the number of samples in each dataset as follows.
\begin{table*}[h]
    \caption{The number of samples in the train, validation, and test sets of each image classification dataset used in \cref{subsec:image_classification}}
    \setlength{\tabcolsep}{6pt} 
    \centering
    \begin{tabular}[t]{lccccccccc}
        \toprule
        \# of Samples
        & Cars & DTD & EuroSAT & GTRSB & RESISC45 & SUN397 & SVHN\\
        \midrule
        Train
        & 7,329 & 1,692 & 16,200 & 23,976 & 18,900 & 17,865 & 65,931\\
        Validation
        & 815 & 188 & 2,700 & 2,664 & 6,300 & 1,985 & 7,326\\
        Test
        & 8,041 & 1,880 & 8,100 & 12,630 & 6,300 & 19,850 & 26,032\\
        \bottomrule
    \end{tabular}
\end{table*}

\subsection{Commonsense Reasoning}
In \cref{subsec:CR}, we used the eight commonsense reasoning datasets, such as BoolQ \cite{clark-etal-2019-boolq}, PIQA \cite{piqa}, SIQA \cite{sap-etal-2019-siqa}, HellaSwag (HellaS in the main paper) \cite{zellers-etal-2019-hellaswag}, WinoGrande (WinoG in the main paper) \cite{winogrande}, ARC-e \cite{arc-e_and_arc_c}, ARC-c \cite{arc-e_and_arc_c}, and OBQA \cite{mihaylov-etal-2018-obqa}.
Following \cite{li2025stiefel,park2025stiefel,chenghao2025greatagain}, we also used the combined dataset of all train sets, i.e., Commonsense170K, which consists of 170,420 samples.
However, for fair comparison, we split Commonsense170K randomly into the train and validation sets in 160,420 and 10,000 samples, respectively, and then tuned the learning rate of each method using the validation set.
After the train and validation, we evaluated each fine-tuned model on each individual test set.
We summarize the number of samples in the each individual test set as follows.
\begin{table*}[h]
    \caption{The number of samples in the test set of each commonsense reasoning dataset used in \cref{subsec:CR}}
    \setlength{\tabcolsep}{5pt} 
    \centering
    \begin{tabular}[t]{lcccccccccc}
        \toprule
        \# of Samples
        & BoolQ & PIQA & SIQA & HellaS & WinoG & ARC-e & ARC-c & OBQA \\
        \midrule
        Test
        & 1172 & 2376 & 3270 & 10042 & 500 & 1838 & 1954 & 1267 \\
        \bottomrule
    \end{tabular}
\end{table*}

\section{Details of Hyperparameters}\label{ap_sec:hyperparameter}
We provide the details of hyperparameters used in our experiments to ensure reproducibility. 
For each task, all compared methods share a common set of hyperparameters, such as the rank, batch size, weight decay, and training schedule, which are outlined in \cref{sec:experiment}.
When decoding a sentence in \cref{subsec:NLG}, we used the beam search with the beam size to $10$, the length penalty to $0.8$, and the no-repeat-Ngram-size to $4$, following \cite{zhang2024riemann}.
For answer generation in \cref{subsec:CR}, we used greedy decoding with a maximum of 30 generated tokens.
As mentioned in \cref{sec:experiment}, we individually tune the learning rate for each method on each validation set to ensure fair comparison.
\Cref{ap_tab:NLG} lists the tuned learning rates for the natural language generation task in \cref{subsec:NLG}.
In this task, we tuned the learning rate over $\{0.02, 0.04, \dots, 0.30\}$ for SGD and $\{0.0002, 0.0004, \dots, 0.0030\}$ for AdamW methods on the validation set.
\Cref{ap_tab:learning_rate_IC,ap_tab:CR} list the tuned learning rates for the image classification task in \cref{subsec:image_classification} and the commonsense reasoning task in \cref{subsec:CR}, respectively.
In these tasks, we tuned the learning rate over $\{0.0001, 0.0002, \dots, 0.001\}$ for AdamW methods on the validation set.
Note that, in the natural language generation task of \cref{subsec:NLG}, we selected the learning rate that achieves the highest average rank (and the lowest rank variance in case of ties in the highest average rank) across the five evaluation metrics (BLEU \cite{papineni-etal-2002-bleu}, NIST \cite{NIST}, METEOR \cite{banerjee-lavie-2005-meteor}, ROUGE-L \cite{lin-2004-rouge}, and CIDEr \cite{cider}) on the validation set.
In the image classification task of \cref{subsec:image_classification}, we selected the learning rate that achieves the top-1 validation accuracy on each dataset.
In the commonsense reasoning task of \cref{subsec:CR}, we selected the learning rate that achieves the top-1 validation accuracy on the validation set which is obtained by splitting Commonsense170K.

\begin{table*}[h]
    \caption{
        Learning rates for the natural language generation task in \cref{subsec:NLG}
    }\label{ap_tab:NLG}
    \centering
    \begin{tabular}[t]{lcccccccccc}
        \toprule
        Method & Learning rate \\
        \midrule
        SGD & 0.08 \\
        w/ \cref{eq:preconditioning_quotient} & 0.06 \\
        w/ \cref{eq:preconditioning_our} (Ours) & 0.02 \\        
        \midrule
        \midrule
        AdamW & 0.0028 \\
        w/ \cref{eq:preconditioning_quotient} & 0.0028 \\
        w/ \cref{eq:preconditioning_our} (Ours) & 0.0030 \\
        \bottomrule
    \end{tabular}
\end{table*}

\begin{table*}[h]
    \caption{Learning rates for the image classification task in \cref{subsec:image_classification}}
    \label{ap_tab:learning_rate_IC}
    \setlength{\tabcolsep}{6pt} 
    \centering
    \begin{tabular}[t]{lccccccccc}
        \toprule
        \# of Samples
        & Cars & DTD & EuroSAT & GTRSB & RESISC45 & SUN397 & SVHN\\
        \midrule
        AdamW
        & 0.0008 & 0.0006 & 0.001 & 0.0007 & 0.001 & 0.0005 & 0.001\\
        w/ \cref{eq:preconditioning_quotient}
        & 0.0009 & 0.0008 & 0.0009 & 0.0009 & 0.001 & 0.0007 & 0.001\\
        w/ \cref{eq:preconditioning_our} (Ours)
        & 0.0009 & 0.001 & 0.001 & 0.001 & 0.001 & 0.0007 & 0.001\\
        \bottomrule
    \end{tabular}
\end{table*}

\begin{table*}[h]
    \caption{
        Learning rates for the commonsense reasoning task in \cref{subsec:CR}
    }\label{ap_tab:CR}
    \centering
    \begin{tabular}[t]{lcccccccccc}
        \toprule
        Method & Learning rate \\
        \midrule
        AdamW & 0.0001 \\
        w/ \cref{eq:preconditioning_quotient} & 0.0001 \\
        w/ \cref{eq:preconditioning_our} (Ours) & 0.0001 \\
        \bottomrule
    \end{tabular}
\end{table*}

\section{Relationship with LoRA-Pro~\cite{wang2025lorapro}}\label{ap_sec:LoRAPro}
LoRA-Pro~\cite{wang2025lorapro} aims to align the gradients of LoRA with those of full fine-tuning, and then proposes to adjust the Euclidean gradients $\nabla_{B} \mathcal{L}$ and $\nabla_{A} \mathcal{L}$ at each step.
This aim differs from but is similar to our metric design principle, i.e., closing the gap to full fine-tuning at the weight level.
In this section, we discuss the relationship between the adjusted gradients of LoRA-Pro and our preconditioned gradients, i.e., our Riemannian gradients \cref{eq:grad_our}.

Let $\nabla_{B}^{\operatorname{Pro}} \mathcal{L}$ and $\nabla_{A}^{\operatorname{Pro}} \mathcal{L}$ be the adjusted gradients of $B$ and $A$ of LoRA-Pro.
Then, from \cite[(33, 34) in Appendix B.2]{wang2025lorapro}, these gradients can be written under our LoRA parametrization \cref{eq:modeling_lora} as
\begin{equation}\label{ap_eq:grad_lorapro}
\begin{aligned}
    &\nabla_{B}^{\operatorname{Pro}} \mathcal{L} = ( I - P_B ) (\nabla_{B} \mathcal{L}) (A^\top A)^{-1} + BX, \\
    &\nabla_{A}^{\operatorname{Pro}} \mathcal{L} = (\nabla_{A} \mathcal{L}) (B^\top B)^{-1} - AX^{\top},
\end{aligned}
\end{equation}
where $X \in \R^{r \times r}$ represents an arbitrary matrix.
We here set the scaling factor $s=1$ for simplicity.
To obtain a unique $X$ at each gradient step, \citet{wang2025lorapro} proposed to solve the following Sylvester equation at each gradient step:
\begin{equation}
    B^\top B X + X A^\top A = -(B^\top B)^{-1} (\nabla_{A} \mathcal{L})^\top A.
\end{equation}
For details, see \cite[Theorem 2.3.]{wang2025lorapro}.
This equation is derived from the motivation that $\nabla_{B}^{\operatorname{Pro}} \mathcal{L}$ and $\nabla_{A}^{\operatorname{Pro}} \mathcal{L}$ remain close to $\nabla_{B} \mathcal{L}$ and $\nabla_{A} \mathcal{L}$, respectively.
To obtain a unique $X$, we instead consider that $\left(\nabla_{B}^{\operatorname{Pro}} \mathcal{L}, \nabla_{A}^{\operatorname{Pro}} \mathcal{L} \right)$ is on the horizontal space defined by \cref{eq:horizon_space}.
For this, it is necessary that
\begin{equation}
    B^\top (\nabla_{B}^{\operatorname{Pro}} \mathcal{L}) A^\top A = B^\top B (\nabla_{A}^{\operatorname{Pro}} \mathcal{L})^\top A.
\end{equation}
By solving this, we get
\begin{equation}\label{ap_eq:uniqueX}
    X = \frac{1}{2} (B^\top B)^{-1} (\nabla_{A} \mathcal{L})^\top A (A^\top A)^{-1}
    = \frac{1}{2} (B^\top B)^{-1} B^\top (\nabla_{B} \mathcal{L}) (A^\top A)^{-1}.
\end{equation}
Plugging \cref{ap_eq:uniqueX} into \cref{ap_eq:grad_lorapro} yields
\begin{equation}
    \nabla_{B}^{\operatorname{Pro}} \mathcal{L} = ( I - \tfrac{1}{2}P_B ) (\nabla_{B} \mathcal{L}) (A^\top A)^{-1}, \quad
    \nabla_{A}^{\operatorname{Pro}} \mathcal{L} = ( I - \tfrac{1}{2}P_A ) (\nabla_{A} \mathcal{L}) (B^\top B)^{-1}.
\end{equation}
Obviously, this is identical to our Riemannian gradient \cref{eq:grad_our}.
This result indicates that our preconditioning is the variant of the gradient adjustment in LoRA-Pro so that exploits the geometry of the quotient manifold $\mathcal{X}/\mathrm{GL}(r)$ inherent to the LoRA parametrization \cref{eq:modeling_lora}.
Notably, this result does not imply that LoRA-Pro can be directly interpreted as Riemannian preconditioning.
This is because $(I - P_B)$ in \cref{ap_eq:grad_lorapro} is not invertible.
Since Riemannian preconditioning is given by an invertible linear transformation induced by a Riemannian metric, no such metric on $\mathcal{X}$ can induce \cref{ap_eq:grad_lorapro} on $T_{(B,A)}\mathcal{X}$.

\section{Empirical Validation of \cref{prop:resulting_full_weight_matrix}}\label{ap_sec:experiment_for_proposition}
To validate \cref{prop:resulting_full_weight_matrix} empirically, we compared the updated weight matrices of full fine-tuning and LoRA with different preconditioning, i.e., $\left\| W_{t+1}^{\operatorname{FFT}} - W_{t+1}^{\operatorname{LoRA}} \right\|_F$, $\left\| W_{t+1}^{\operatorname{FFT}} - W_{t+1}^{\operatorname{LoRA-CP}} \right\|_F$, and $\left\| W_{t+1}^{\operatorname{FFT}} - W_{t+1}^{\operatorname{LoRA-OP}} \right\|_F$.
\Cref{prop:resulting_full_weight_matrix} depends on that all methods share the common $t$-th weight matrix $W_{t}$ and trainable factors $B_t$ and $A_t$.
To implement this setting, we conducted a single-step fine-tuning task for the GPT2-medium model on the E2E natural language generation challenge dataset, like \cref{subsec:NLG}.
In this experiment, we added uniformly random perturbations to $W_{0}$, $B_{0}$, and $A_{0}$ in the range $[-0.01, 0.01)$, and shared them across all methods.
Each method then performed a single optimization step using the same randomly sampled batch.
As a result, we obtained $W_{1}^{\operatorname{FFT}}$, $W_{1}^{\operatorname{LoRA}}$, $W_{1}^{\operatorname{LoRA-CP}}$, and $W_{1}^{\operatorname{LoRA-OP}}$ from the common random initialization and batch.
We used the SGD methods because \cref{prop:resulting_full_weight_matrix} is based on the gradient descent update.
We here tested three learning rates $\{0.02, 0.06, 0.08\}$ that were tuned and selected on the validation set as described in \cref{ap_tab:NLG}. 
To follow the assumption in \cref{sec:theoretical_insights}, we fixed the scaling factor of $BA^\top$ to $1$, the dropout to $0.0$, and the weight decay to $0.0$.
All other hyperparameters were set as described in \cref{subsec:NLG}.
We repeated this experiment 500 times and computed the average Frobenius norm over all runs and layers of the model.

\Cref{ap_tab:weight} shows the results.
We observed that $\left\| W_{1}^{\operatorname{FFT}} - W_{1}^{\operatorname{LoRA-OP}} \right\|_F \leq \left\| W_{1}^{\operatorname{FFT}} - W_{1}^{\operatorname{LoRA}} \right\|_F$ and $\left\| W_{1}^{\operatorname{FFT}} - W_{1}^{\operatorname{LoRA-OP}} \right\|_F \leq \left\| W_{1}^{\operatorname{FFT}} - W_{1}^{\operatorname{LoRA-CP}} \right\|_F$ at each learning rate setting.
Although a more detailed analysis across various experimental settings would be desirable, these results provide some empirical evidence for \cref{prop:resulting_full_weight_matrix}.


\begin{table*}[h]
    \caption{
        Comparisons of the Frobenius norm.
    }\label{ap_tab:weight}
    \centering
    \renewcommand{\arraystretch}{1.5}
    \begin{tabular}[t]{cccc}
        \toprule
        Learning rate & 0.02 & 0.06 & 0.08 \\
        \midrule
        $\left\| W_{1}^{\operatorname{FFT}} - W_{1}^{\operatorname{LoRA}} \right\|_{F}$ 
        & 10647.35 $\times 10^{-5}$ & 16542.16 $\times 10^{-5}$ & 20334.76 $\times 10^{-5}$ \\
        $\left\| W_{1}^{\operatorname{FFT}} - W_{1}^{\operatorname{LoRA-CP}} \right\|_{F}$ 
        & 10646.05 $\times 10^{-5}$ & 16534.73 $\times 10^{-5}$ & 20324.08 $\times 10^{-5}$ \\
        $\left\| W_{1}^{\operatorname{FFT}} - W_{1}^{\operatorname{LoRA-OP}} \right\|_{F}$ 
        & 10646.04 $\times 10^{-5}$ & 16534.69 $\times 10^{-5}$ & 20324.02 $\times 10^{-5}$\\ 
        \bottomrule
    \end{tabular}
\end{table*}


\end{document}